\def\hyph{-\penalty0\hskip0pt\relax}
\newlist{Henum}{enumerate}{1}
\setlist[Henum]{label=\textbf{H\arabic*}, leftmargin=2.2em, itemsep=0.25\baselineskip}
\newmdenv[
  skipabove=\baselineskip,
  skipbelow=\baselineskip,
  leftline=true, rightline=false, topline=false, bottomline=false,
  linecolor=black!30, linewidth=2pt,
  innerleftmargin=10pt, innerrightmargin=6pt, innertopmargin=6pt, innerbottommargin=6pt,
  roundcorner=2pt,
]{hypobox}
\newlist{SPenum}{enumerate}{1}
\setlist[SPenum]{label=\textbf{SP\arabic*}, leftmargin=2.2em, itemsep=0.25\baselineskip}
\newcommand{\norm}[1]{\left\lVert#1\right\rVert}
\title{\bf PHYSICS-INFORMED SENSOR COVERAGE THROUGH STRUCTURE PRESERVING DATA-DRIVEN MODELS}
\author{Benjamin D. Shaffer, Brooks Kinch, Joseph Klobusicky, M. Ani Hsieh, and Nathaniel Trask}
\begin{document}

\maketitle
\thispagestyle{plain}
\pagestyle{plain}

\begin{abstract}
We present a machine learning framework for adaptive source localization in which agents use a structure-preserving digital twin of a coupled hydrodynamic-transport system for real-time trajectory planning and data assimilation. The twin is constructed with conditional neural Whitney forms, coupling the numerical guarantees of finite element exterior calculus with transformer based operator learning. The resulting model preserves discrete conservation, and adapts in real time to streaming sensor data. It employs a conditional attention mechanism to identify: a reduced Whitney-form basis; reduced integral balance equations; and a source field, each compatible with given sensor measurements. The induced reduced-order environmental model retains the stability and consistency of standard finite-element simulation, yielding a physically realizable, regular mapping from sensor data to the source field. We propose a staggered scheme that alternates between evaluating the digital twin and applying Lloyd’s algorithm to guide sensor placement, with analysis providing conditions for monotone improvement of a coverage functional. Using the predicted source field as an importance function within an optimal-recovery scheme, we demonstrate recovery of point sources under continuity assumptions, highlighting the role of regularity as a sufficient condition for localization. Experimental comparisons with physics-agnostic transformer architectures show improved accuracy in complex geometries when physical constraints are enforced, indicating that structure preservation provides an effective inductive bias for source identification.
\end{abstract}

\begin{keywords}
  Optimal coverage, Digital twins, Operator learning, Adaptive sampling, Inverse problems, Structure preservation, Physics-informed machine learning, Data-driven modeling
\end{keywords} 

\begin{AMS}
    35Q93, 68T07, 35R30
\end{AMS}


\section{Abstract Problem Definition}

We consider a physical system governed by a steady-state coupled hydrodynamic–passive scalar transport process:
\begin{align}
    \label{eq:ps1}
    \mathcal{M}(\mathbf{v}) &= 0, \\
    \nabla \cdot \mathcal{F}(u, \mathbf{v}) &= f \quad \text{in } \Omega,
\end{align}
where $\mathcal{M}$ denotes a hydrodynamic model that generates the velocity field $\mathbf{v}$ (e.g., satisfying Navier–Stokes or Darcy), and $\mathcal{F}$ is the transport flux of a conserved density $u$, whose divergence balances the source $f$. The system is subject to appropriate boundary conditions on $\partial\Omega$ for both hydrodynamics and transport. 
 
We consider a set of agents $X = \{x_i(t)\}_{i=1}^N$ that provide pointwise sensor measurements $z_i = u(x_i(t))$. We formulate a distributed control problem in which sensor dynamics are prescribed to optimally estimate an unknown source field $f$. Central to this control problem is the properties of the inverse map $\mathbf{z} \mapsto (u,f)$. Our hypothesis is that structure-preserving, data-driven models encoding reduced-order predictions of Eq.~\eqref{eq:ps1} yield improved performance compared to physics-agnostic surrogates which treat the mapping $\mathbf{z} \mapsto (u,f)$ exclusively as a supervised regression problem.

We achieve this objective by constructing an adaptable, deployable \emph{digital twin}, a data-driven reduced-order model supporting real-time inference and data assimilation. Specifically, we introduce by ansatz a conservation law of the form
\begin{equation}
    \mathcal{H}(u; \theta, z) := \nabla \cdot \mathcal{F}_\theta(u, z) - f_\theta(z) = 0,
\end{equation}
where $\mathcal{F}_\theta(u, z)$ is a learnable flux and $f_\theta(z)$ is a learnable source term, both conditioned on sensor input $z$; we generically denote learnable operators throughout using $\theta$. To construct a reduced-order finite element exterior calculus (FEEC) basis compatible with this ansatz, we employ our recently developed Conditional Neural Whitney Forms (CNWF)~\cite{kinch2025structure}. The reduced basis, flux, and source are trained offline in concert to obtain a parametric model that (1) can be solved in real time and (2) can specialize these three learnable components to different sensor readings.

Specifically, we assume the flux operator $\mathcal{F}$ and the source field $f$ are unknown, and that we have access only to sparse and noisy measurements of the state $u$, obtained through an observation function $\mathcal{O}$, yielding sensor data $z = \mathcal{O}(u)$. We distinguish offline training and online deployment. Offline, we assume access to data pairs of the state and observations, $\{(z_i,u_i,f_i)\}_{i=1}^D$. During deployment, each observation $z$ conditions the learned physics, recalibrating the PDE operator in real time without retraining and thereby fulfilling the closed-loop requirements of a digital twin~\cite{willcox2023foundational}.

To identify the parameters $\theta$, we solve a PDE-constrained optimization problem:
\begin{equation}
    \min_\theta \; \mathcal{L}(u,\theta, z) \quad \text{subject to} \quad \mathcal{H}(u; \theta, z) = 0,
\end{equation}
where $\mathcal{L}$ is a loss functional that penalizes prediction error and enforces consistency with the sensor data. Data can be continuously generated by sampling random sensor configurations and providing compatible $(u,f)$ fields.

Our digital twin approach supports \emph{adaptive sensor placement} by leveraging the predicted source field $f_\theta(z)$ to inform the repositioning of mobile sensors; we prescribe a policy for agents of the form
\begin{equation}
    \dot{x}_i = G(f),
\end{equation}
where $F$ is a prescribed \textit{optimal coverage functional} designed to move sensors toward the source while maintaining quasi-uniform coverage over the support of $f$. This establishes a closed-loop interaction between observation, inference, and data acquisition. The model can then be refined through active feedback. 

We consider two hypotheses:
\begin{hypobox}
\begin{Henum}
    \item \label{hyp:H1} Physics-based constraints in the digital twin impose inductive biases that induce a more regular and accurate inverse mapping from sensor data to the source field.
    \item \label{hyp:H2} In the optimal coverage problem, regularity in the sensor-to-source map improves source recovery.
\end{Henum}
\end{hypobox}
\section{Motivation}
\label{sec:intro}

Many sensing tasks of scientific and engineering interest, such as locating pollution sources, tracking environmental hazards, or monitoring industrial processes, require measurements in dynamic and uncertain environments. In these settings, static sensor networks are often inefficient or infeasible, and the ability to reposition sensors in response to evolving knowledge of the environment is critical. Mobile sensor platforms, when paired with \emph{deployable digital twins}, that is, computational models that can be updated in real time using incoming data, establish an iterative loop of prediction, measurement, and control \cite{willcox2023foundational}. This capability improves sensing efficiency and guides sensor placement toward informative regions. This work develops the mathematical and algorithmic foundations for such an integrated system, focusing on source identification problems in which the digital twin directly informs sensor placement. This is of particular interest given rapid advances in neural digital twins where model inference is feasible on lightweight hardware \cite{kinch2025structure}.

We consider the \emph{optimal coverage} problem for \emph{source identification}, where mobile sensors must arrange themselves to identify the location of an unknown source term. Such problems involve sparse, indirect, and noisy observations of the transported field. Accurate reconstruction requires inverting the transport process, which is generally ill-posed. Traditional methods rely on prior knowledge, typically the exact form of the transport dynamics, the geometry, and possible source configurations \cite{bal2007inverse}. Because transport dynamics smooth and displace the influence of the source, the inverse problem is often highly ill-posed and sensitive to observation quality \cite{isakov1990inverse}. In many real-world settings, dense sensing is infeasible, and the ability to adaptively allocate limited mobile sensors is essential for accurate estimation \cite{khodayi2019model, luo2019distributed}. Mobile sensors provide a cost-efficient and adaptable means of data collection in dynamic or unknown environments; for example, in the Earth’s oceans, where it is infeasible to establish a dense network of seafaring drones.

Classical inverse methods, such as variational formulations and PDE-constrained optimization, provide principled tools for source reconstruction, but they typically assume full-field measurements, detailed knowledge of the underlying system, and require handcrafted regularization to mitigate ill-posedness \cite{engl2015regularization}. More recently, data-driven and machine learning approaches have emerged as flexible alternatives, using neural networks to approximate forward or inverse maps directly from partial observations \cite{arridge2019solving}. However, such models often lack interpretable parameterization (i.e., are treated as black boxes) \cite{rowan2025definition, linardatos2020explainable}, or impose governing equations only as soft constraints, leading to violations of physical consistency \cite{trask2022enforcing}. In inverse settings, this can yield implausible reconstructions, poor generalization under sparse data, and limited interpretability, posing challenges for high-consequence applications. Moreover, most machine-learned surrogates are not easily integrated with adaptive sampling or real-time feedback, limiting their use in adaptive sensing tasks.

We propose a structure-preserving framework for inverse source identification that directly embeds the conservation laws of the governing equations into a learned surrogate PDE model, without assuming a fixed functional form. Rather than learning the inverse operator, we learn both the source term and a nonlinear flux describing the transport process on a reduced finite element basis, and reconstruct the scalar field by solving the coupled conservation equation. We employ the recently proposed Conditional Neural Whitney Forms method (CNWF)~\cite{kinch2025structure}. All fields are discretized using finite element exterior calculus (FEEC) \cite{arnold2006finite, arnold2018finite}, ensuring compatibility with geometry and exact discrete conservation \cite{actor2024data, trask2022enforcing}. The source is represented as a normalized spatial density over a reduced partition-of-unity basis, yielding interpretable estimates compliant with conservation structure. This formulation explicitly couples the observed scalar field and the unknown source through a shared PDE constraint that regularizes the inverse problem and preserves physical structure throughout inference.

\begin{figure}[htbp]
    \centering
    \includegraphics[width=0.95\textwidth]{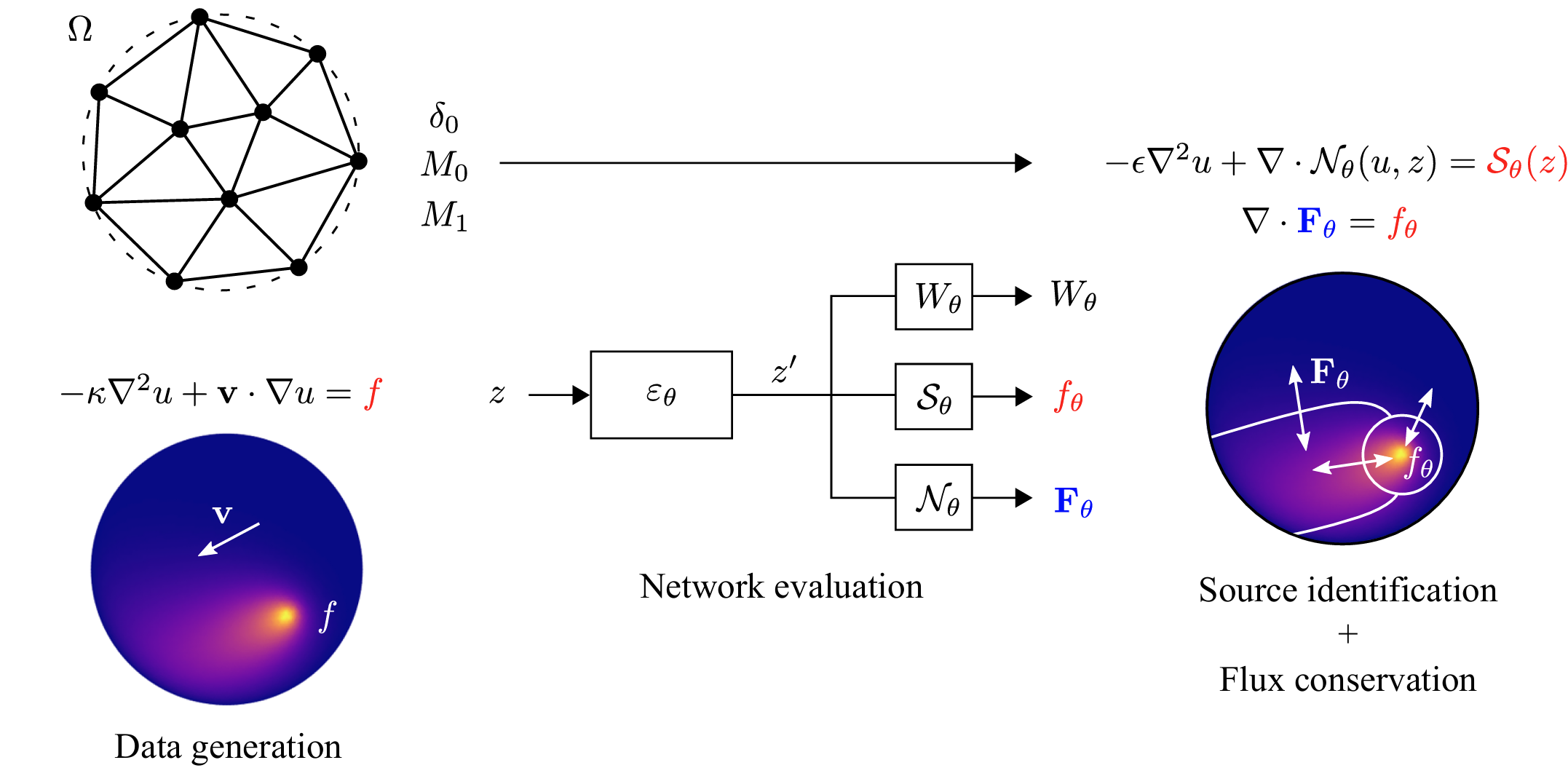}
    \caption{Overview of the source estimation procedure with the Conditional Neural Whitney Forms method. We learn a surrogate PDE over a reduced basis encoding control volumes, with a learnable nonlinear flux and source term. Sensing data are encoded via a self-attention transformer encoder to condition the basis, flux, and source. The resulting system is solved using a real-time nonlinear finite element solver and inherits the structure-preserving properties of the FEEC discretization. This provides a physically grounded interpretation of the learned source and directly couples it to the observed field data via the PDE, improving source prediction while remaining consistent with the unknown governing dynamics. This digital twin is then used to perform a modified version of Lloyd's algorithm for optimal coverage, directing sensors to locations informed by physics.}
    \label{fig:method_overview}
\end{figure}

\begin{sloppypar}
Recent transformer-based operator learning approaches lack explicit physical structure, relying instead on their powerful approximation properties to achieve accurate predictions. Our method integrates a transformer encoder into a finite element framework, combining the flexibility of attention-based models with the guarantees of FEEC. Conditioning enables the pretrained PDE operator to adapt in real time to sensor data, similar to how generative computer vision models sample images from a distribution conditioned on a text prompt. We supervise on both the scalar field and source term over training datasets to ensure consistency with the governing PDE, while downstream tasks use only the learned source term. An overview is given in Figure \ref{fig:method_overview} and explained in detail in Section \ref{sec:methodology_pde_surrogate}. A deep explanation of the FEEC framework is beyond the scope of this paper, and we direct interested readers to \cite{kinch2025structure}.
\end{sloppypar}

To enable a closed-loop connection between inference and sensing, we interpret the learned source field as an importance distribution that guides sensor placement via a geodesic implementation of Lloyd’s algorithm \cite{lloyd1982least} for nonconvex domains. This yields a two-scale optimization scheme: at each outer iteration, sensors are repositioned to minimize expected distance under the predicted importance field, and the model is updated with new observations from the new sensor locations. We provide a theoretical basis establishing improvements in prediction accuracy, model-relative coverage, and true coverage objectives under suitable regularity and accuracy conditions. This defines a closed feedback loop in which improved sensing leads to better predictions, which in turn guide sensors toward more informative configurations.

We validate our method on prototypical passive scalar transport systems. In all cases, our approach outperforms standard baselines, including an identical transformer model lacking structure preservation. We show that the proposed model improves both the inverse problem and yields even larger relative gains when coupled with adaptive sampling. The resulting predictions are more consistent with the governing PDE. Physical structure, reduced-basis interpretability, and integrated mobile sensor control together enable accurate source recovery under sparse and noisy data. These results demonstrate the effectiveness of combining structure-preserving modeling with adaptive sampling for inverse problems in transport-dominated systems.
\section{Problem statement}
\label{sec:problem_statement}
We study inverse source identification problems governed by steady-state advection–diffusion physics on a bounded domain $\Omega \subset \mathbb{R}^d$ with a given divergence-free velocity field and P\'eclet number. The scalar field $u : \Omega \to \mathbb{R}$ satisfies the forward model used in experiments,
\begin{equation}
    \label{eq:advec-diff}
    - \frac{1}{\text{Pe}} \nabla^2 u + \mathbf{v} \cdot \nabla u = f
    \quad \text{in } \Omega,
    \qquad
    u|_{\partial \Omega} = g,
\end{equation}
where $\text{Pe} > 0$ is the P\'eclet number,
$\mathbf{v}: \Omega \to \mathbb{R}^d$
is an unknown, divergence-free velocity field scaled so that \(\|\mathbf v\|_{L^\infty(\Omega)}=1\), $g$ is a fixed boundary condition, and $f:\Omega \to \mathbb{R}$ is an unknown source field.

We assume access to a set of $N$ mobile sensors at locations $X = \{\mathbf{x}_i\}_{i=1}^N$, each recording local measurements of the scalar and velocity fields, along with the global P\'eclet number:
\begin{equation}
    \label{eq:z}
    z_i = \left( \mathbf{x}_i,\; u(\mathbf{x}_i) + \eta_i^{(u)},\; \mathbf{v}(\mathbf{x}_i) + \eta_i^{(v)}, \text{Pe} \right),
\end{equation}
\noindent where $\eta_i^{(u)}$, $\eta_i^{(v)}$ are independent additive noise terms. We aim to infer the unknown source field $f$ from these noisy, sparse observations, and to guide sensor placement toward regions that improve reconstruction quality \cite{tarantola2005inverse}.

In this work we must solve two subproblems (SP1-2):
\begin{hypobox}
\begin{SPenum}
    \item \label{sp:SP1} \textbf{Source identification:} Given sparse, local observations $z = \{z_i\}$, estimate a normalized source density $\rho_\theta(\cdot \mid z)$ from a parameterized model, such that the corresponding scalar field $\tilde{u}_\theta$, obtained by solving \eqref{eq:advec-diff} with learned source $f_\theta $, matches the supervising field and source data.
    \item \label{sp:SP2} \textbf{Adaptive sensor placement:} Use the predicted density $\rho_\theta(\cdot \mid z)$ as an importance function to adaptively steer the sensor positions $X$ by minimizing the coverage energy functional:
    \begin{equation}
        \label{eq:generic_J}
        \mathcal{J}(X) = \int_\Omega \rho_\theta(\mathbf{x} \mid z)\, \min_{1 \le i \le N} g(d(\mathbf{x}, \mathbf{x}_i)) \, d\mathbf{x},
    \end{equation}
    where $d(\mathbf{x}, \mathbf{y})$ is a distance metric on the domain and $g(\cdot)$ is a non-decreasing function describing sensor degradation \cite{cortes2004coverage}.
\end{SPenum}
\end{hypobox}


\noindent In concert, these form a closed-loop inference-and-control problem: we alternate between estimating the source from current observations and repositioning sensors to improve future inference. The remainder of the paper develops a structure-preserving surrogate model that drives this loop, along with an implementation of a geodesic Lloyd algorithm \cite{lloyd1982least, cortes2004coverage} for sensor coverage optimization. We demonstrate the combination for an adaptive sensor-placement algorithm for source localization with supporting analysis and empirical convergence to the true source location and improvement over baseline models without structure preservation.
\section{Prior work}
\label{sec:background}

\subsection{Inverse source identification in PDEs}
\label{sec:background_si}

Classical approaches to source identification in transport systems formulate the problem as a PDE-constrained optimization task, typically regularized via Tikhonov penalties and solved using adjoint-state or iterative methods \cite{chavent2010nonlinear, bakushinsky2005iterative}. These approaches offer interpretability and explicit control over regularization and error metrics, but they can degrade under sparse/noisy observations and require repeated PDE solves, which is challenging for mobile or real-time sensing. Alternative strategies based on Bayesian inversion \cite{tarantola2005inverse, stuart2010inverse} and model-based data assimilation \cite{alpay2000model} have been developed to mitigate these limitations.

\subsection{Sensor placement and source-seeking strategies}
\label{sec:background_sensor_placement}
When sensing is expensive or constrained, accurate inversion depends not only on the quality of the model but on data collection itself \cite{matthes2005source, weimer2009multiple}. In robotics and environmental monitoring, heuristic strategies like chemotaxis \cite{russell2003comparison, marques2002olfaction}, infotaxis, and entropy maximization have long been used for source localization \cite{russell2003comparison, webster2012bioinspired, dhariwal2004bacterium, zarzhitsky2005distributed, soares2015distributed}.
Gaussian Process models provide a principled Bayesian approach and can be integrated with prior physical data \cite{khodayi2019model}.
Complementing these are coverage-based methods such as Lloyd’s algorithm and its generalizations, which provide a principled approach by minimizing the coverage cost in \eqref{eq:generic_J} \cite{hajieghrary2016multi, hajieghrary2017information}.

\subsection{Data-driven inverse modeling}
\label{sec:background_inverse}

Machine learning methods have emerged as a flexible alternative for solving inverse problems under sparse and noisy data, or in uncertain settings \cite{de2022deep, mishra2022estimates, kamyab2022deep, vesselinov2018contaminant, kontos2022machine, reiter2017machine}. 
Operator learning methods, such as DeepONets and Fourier neural operators, attempt to learn mappings between function spaces directly \cite{li2020fourier, lu2021learning}, and can be used for inverse problems where the input is the solution space \cite{molinaro2023neural}.
Physics-informed neural networks (PINNs) incorporate differential constraints into the loss function \cite{raissi2019physics} and have been widely explored for inverse problems \cite{yu2022gradient, jagtap2022physics}.
Broadly, physical structure is imposed weakly if at all, and the coupling between the sampled field and the source term of interest must be implicitly deduced from training \cite{karniadakis2021physics, wang2023physics, cai2021physics}.
The resulting models can be unstable or uninterpretable \cite{wang2021understanding, jiang2024structure}.

\subsection{Structure-preserving learning}
\label{sec:background_ml}
A promising direction in machine learning is the development of \emph{structure\hyph preserving machine learning} strategies which incorporate physical or geometric structure either by construction of neural architechtures or by training with hard constraints, enforcing e.g: symmetries, conservation laws, or geometric invariants \cite{greydanus2019hamiltonian, cohen2016group, brandstetter2022clifford, chen2021neural, hernandez2021structure, patel2022thermodynamically}.
We consider methods grounded in geometric discretization, where structure preservation is enforced through differential form-based representations of physical quantities through Finite Element Exterior Calculus (FEEC) \cite{arnold2006finite}.

Discrete Exterior Calculus (DEC) and Finite Element Exterior Calculus (FEEC) provide tools for discretizations with exact \emph{discrete} preservation of topological identities (e.g., Stokes/Gauss), flux balances, and cohomology classes~\cite{arnold2018finite, desbrun2005discrete}. Whitney forms, in particular, define basis functions that represent fields, fluxes, and sources in conforming mixed finite element spaces that interpolate corresponding differential forms, inheriting in the process exact treatment of the generalized Stokes theorems \cite{lohi2021whitney}.
These foundations have been extended to machine learning, with recent methods incorporating data-driven Whitney forms, learned partitions of unity, and graph-based exterior calculus into neural architectures \cite{trask2022enforcing, actor2024data, jiang2024structure}. The resulting models offer conservation by design, interpretable structure, and principled integration with mesh-based physical domains, making them especially well-suited to inverse problems where both the source and its observational effects are governed by conservation laws.


\section{Methodology}
\label{sec:methodology}

\subsection{Learnable structure-preserving reduced order model}
\label{sec:methodology_pde_surrogate}

We summarize the approach developed in \cite{kinch2025structure} here, but defer to the previous work for details. Our contribution at the ROM level is the introduction of a conditional source field, in addition to the conditional shape functions and physics considered in the original work.

The original governing equations \eqref{eq:advec-diff} hold at the differential level; however, after an aggressive dimension reduction, we assume that only the conservation structure holds
\begin{equation}
    \label{eq:conservation}
    \nabla \cdot \mathbf{F}(u) = f \quad \text{in } \Omega,
\end{equation}
where \( \mathbf{F} \) is a conserved flux and \( f \) is a scalar source field; the fluxes encoded by $\mathbf{F}$ are assumed to contain the effective homogenization of the physics onto a reduced basis and therefore differ from the original model. In the data-driven FEEC framework \cite{kinch2025structure, actor2024data, trask2022enforcing}, the unknown fluxes are posed as a Lipschitz nonlinearity (potentially conditioned on $z$) stabilized by a diffusive flux, 
\begin{equation}
    \mathbf{F}_\theta(u, z) = -\varepsilon \nabla u + \mathcal{N}_\theta(u, z),
\end{equation}
for some trainable \( \varepsilon > 0 \). Discretizing with a mixed Galerkin setting taking $u$ and $\mathbf{F}$ as Whitney forms, we obtain a discrete nonlinear model encoding the conservation relationships. Identifying the mass matrices of Whitney forms with the Hodge star in the Discrete Exterior Calculus (DEC), we can interpret the flux as a parameterized co-chain to co-chain map between degrees of freedom for $u$ and $\mathbf{F}$; put plainly, the Whitney forms describing $u$ and $\mathbf{F}$ model geometry associated with control volumes and their areas, respectively, while $\mathcal{N}$ models the exchange of conserved quantities between them.

This approach offers particular advantages over black box techniques:
\begin{itemize}[leftmargin=*]
    \item It incorporates complex geometry directly through the finite element bilinear forms, avoiding issues encountered by popular neural operators on non-trivial domains.
    \item It guarantees exact conservation of flux–source balance at the discrete level by construction so that predictions satisfy conservation principles independent of data sparsity.
    \item The model admits a Lax-Milgram-style stability theory, with a consequent well-posedness theorem (See \cite[Thm. 2.1]{kinch2025structure}, \cite[Thms. 3.1-5]{trask2022enforcing}) that guarantees existence of weak solution even when the model is solved out of distribution. This imposes an inductive bias toward regular, physically-meaningful solutions in data sparse regimes. 
    \item It explicitly couples the source prediction to the learned flux through the local conservation principle (See e.g.\cite[Sec. 3]{trask2022enforcing}), exposing an exploitable linkage between local sensor readings of the field/flux state and the global location and magnitude of sources.
\end{itemize}

By conditioning $\mathcal{N}$ and $f$ on sensor readings, we obtain a parametric discrete model which specializes to sensor data through the conditioning variable $z$; after denoting discrete operators with a superscript $h$, we express the model as
\begin{equation}
    \label{eq:learned-pde}
    -\varepsilon \Delta^h u + \nabla^h \cdot \mathcal{N}^h_\theta(u, z) = \mathcal{S}^h_\theta(z) \quad \text{in } \Omega, \quad u|_{\partial \Omega} = g.
\end{equation}
Further details on the discrete system may be found in Section~\ref{sec:methodology_discretization}. While the conditioning of digital twins through trainable Whitney forms and fluxes was first described in \cite{kinch2025structure}, the current work introduces the learned source term \( \mathcal{S}_\theta(z) \), which is crucial for the task of source identification. After performing the conditioning step in \textbf{SP1}, the learned $\mathcal{S}^h_\theta(z)$ is then used to construct the importance function $\rho_\theta(\mathbf{x} \mid z)$ in \textbf{SP2}.

\subsection{Discretization}
\label{sec:methodology_discretization}

We summarize the discretization process here, referring readers to Trask et al. \cite{trask2022enforcing} for a detailed description of data-driven Whitney forms and Kinch et al. \cite{kinch2025structure} for further details of the conditioning transformer and problem setup. 

The scalar field $u$ and source $f$ are identified as 0-forms, while the flux $\mathbf{F} $ is identified as a 1-form; both are expanded in low order Whitney forms of the corresponding degree (specifically, $\mathcal{P}_1$ and Nedelec elements) on a simplicial mesh of $\Omega$. Let $\{ \varphi_i^0 \}_{i=1}^{n_0^f}$ denote the 0-form basis with nodal degrees of freedom, and $\{ \varphi_e^1 \}_{e=1}^{n_1^f}$ the 1-forms with tangential edge degrees of freedom.

To reduce computational cost and allow for adaptive resolution, we construct a data-dependent reduced basis for 0-forms. Specifically, we define coarse scalar basis functions $\{ \psi_j^0 \}_{j=1}^{n_0^c} \subset H^1(\Omega)$ as convex combinations of the fine-scale basis:
\begin{equation}
    \psi_j^0(x) = \sum_{i=1}^{n_0^f} W_{ji}(z)\, \varphi_i^0(x),
\end{equation}
where $W(z) \in \mathbb{R}^{n_0^c \times n_0^f}$ is a learned, nonnegative matrix with:
\begin{equation}
    W_{ji} \geq 0, \quad \sum_{j=1}^{n_0^c} W_{ji} = 1 \quad \text{for all } i.
\end{equation}
This construction ensures that $\{ \psi_j^0 \}$ forms a partition of unity over $\Omega$, and defines a set of interpretable control volumes. This exposes a fully connected graph structure, with the learned fluxes operating on the edges. In practice this basis may be of order 1-10 elements, drastically compressing the original $\mathcal{P}_1$ basis.

Following the standard FEEC construction, coarsened 1-forms must be constructed which support a surjective coboundary operator. We do this by applying the standard construction of Nedelec elements to the coarsened basis.
\begin{equation}
    \psi^1_{ij} = \psi^0_j \nabla \psi^0_i - \psi^0_i \nabla \psi^0_j,
\end{equation}
defining subspaces of the original de Rham complex which form a de Rham complex of their own. The current work requires only the $div-grad$ subcomplex; see \cite{actor2024data} for details of the complete complex.

All quantities are represented on this reduced basis without loss of structure. The reduced scalar field coefficients $\hat{u} \in \mathbb{R}^{n_0^c}$, fluxes $\hat{\mathbf{F}} \in \mathbb{R}^{n_1^c}$, and sources $\hat{f} \in \mathbb{R}^{n_0^c}$ are defined in terms of coarse Whitney forms $\{ \psi_j^0 \}, \{ \psi_{ij}^1 \}$. We may easily pull back the coarsened Whitney forms onto the fine scale mesh through the affine operations \(f_\theta=W^\top\hat{f}_\theta\), and \(u_\theta=W^\top\hat{u}_\theta\).

The discrete conservation law is expressed in the reduced basis as:
\begin{equation}
    \label{eq:discrete_conservation}
    \underbrace{\varepsilon\delta_0^\top M_1 \delta_0 \hat{u} + \delta_0^\top M_1 \mathcal{N}_{\theta}(\hat{u}, z)}_{\mathcal{F}(\hat{u}; \theta, z)}
    =
    \underbrace{M_0 \hat{f}_\theta(z)}_{b(\theta, z)},
\end{equation}
where $M_0, M_1$ are reduced mass matrices for the coarsened 0- and 1- forms, respectively, and $\delta_0$ is the reduced coboundary operator. This yields the PDE constraint,
\begin{equation}
    H(\hat{u}; \theta, z) := \mathcal{F}(\hat{u}; \theta, z) - b(\theta, z) = 0.
\end{equation}
See~\cite{kinch2025structure, trask2022enforcing} for details of matrix assembly and solver setup.

\subsection{Learnable parameterization}
We parameterize the source, flux, and basis networks in \eqref{eq:discrete_conservation} as task-specific heads built on a shared transformer encoder, \(\mathcal{E}_\theta\). The encoder maps the unordered sensor inputs to a latent representation, which conditions each component of the surrogate PDE model (adaptive basis, flux, and source). The implementation is based on the cross-attention transformer CNWF method proposed in \cite{kinch2025structure}, with modifications to accommodate the adaptive sensing and source identification application. Full architectural details are given in Appendix~\ref{app:architecture}.

\subsection{Source identification}
\label{sec:methodology_si}

In the source identification problem, we seek to recover the unobserved source field responsible for generating sparse and noisy observations of the scalar field \( u \), under the governing PDE (advection-diffusion in our experiments). We assume access to a set of \( N \) sensors distributed across the domain \( \Omega \), which make point measurements to form the model conditioning input, \( z = \{ z_i \}_{i=1}^N \). 

Notably, the underlying source field \( f \) is unobserved. Recovering it from indirect field measurements requires solving an ill-posed inverse problem, where physical constraints are essential for meaningful inference.

The predicted source $f_\theta(z)$ is normalized to integrate to one (after projection onto \(\varphi^0\)), with nonegativity enforced (e.g. via ReLU),
\begin{equation}
    \label{eq:density}
    \rho_\theta(\mathbf{x}\,|\,z)
    \;=\;
    \frac{f_\theta(z)(\mathbf{x})}
         {\displaystyle\int_{\Omega}f_\theta(z)(\mathbf{y})\,\mathrm{d}\mathbf{y}},
    \qquad
    \rho_\theta\ge 0,
    \;
    \int_{\Omega}\rho_\theta=1,
\end{equation}
so that $\rho_\theta$ may be interpreted as a spatial probability density for the point-source location, or source distribution in complex source cases. 

To train the model, we solve the PDE-constrained optimization problem for a given normalized source term \(\rho_{\mathrm{true}}\) and the corresponding solution \(u_{\mathrm{true}}\) as:
\begin{align}\label{eq:opt}
    \min_{\theta} \; &\| u_\theta(\theta, z) - u_{\text{true}} \|^2 + W_{2,\epsilon}^2(\rho_{\mathrm{true}},\rho_\theta(\,\cdot\,|\,z))
    \\
    &\text{subject to} \quad H(u_\theta, \theta, z) = 0,
\end{align}
where $W_{2,\epsilon}^2$ corresponds to the 2-Wasserstein distance as an optimal transport metric for the source reconstruction; due to the compact nature of the source, a simple MSE objective performed poorly.
We approximately compute the geodesic Wasserstein metric using the sinkhorn distance \cite{cuturi2013sinkhorn}.
\begin{itemize}[leftmargin=*]
    \item Following the well-posedness of Equation \ref{eq:discrete_conservation}, \ref{eq:opt} has a non-empty feasible set, and thus any solution satisfies Equation \ref{eq:discrete_conservation}.
    \item Consequently, the learned model enforces topological structure independent of data availability, optimizer error, or extant of extrapolation.
\end{itemize}
See \cite{kinch2025structure} for details on the solution of the constrained optimization problem.

\subsection{Adaptive source localization} 
\label{sec:methodology_lloyds}
Given the model-defined source distribution $\rho_\theta$, we implement the classic Lloyd's algorithm for optimal coverage to distribute sensors within the domain \cite{cortes2004coverage}. We describe a geodesic implementation to allow generalization to complex, non-convex geometries, as shown in the results. Lloyd's algorithm arises from quantization and, in the Euclidean setting, provides monotonic descent towards a weighted centroidal Voronoi tessellation (CVT) \cite{du1999centroidal}, which is known to minimize the coverage functional (e.g. \eqref{eq:generic_J}), and therefore has been adapted in distributed sensor placement \cite{cortes2004coverage}. For all computations we rely on a discretized approximation on the original mesh, allowing for accurate evaluation of integrals with quadrature and computational efficiency.

We start by defining the geodesic distance inside the possibly non-convex domain~$\Omega$ by $d_\Omega(\mathbf{x},\mathbf{y})$,

\[
    \label{eq:geodesic}
    d_\Omega(\mathbf{x},\mathbf{y})
    =
    \inf_{\substack{\gamma\in C^{1}\!\bigl([0,1],\overline{\Omega}\bigr)\\
                    \gamma(0)=\mathbf{x},\,\gamma(1)=\mathbf{y}}}
    \int_{0}^{1}\!
    \bigl\|\gamma'(t)\bigr\|\,\mathrm{d}t,
\]

\noindent i.e. the length of the shortest $C^1$ path contained in
$\overline{\Omega}$ that joins the points
$\mathbf{x},\mathbf{y}\in\overline{\Omega}$, where $\overline{\Omega}=\Omega \cup \partial\Omega$~\cite{kimmel1998computing}. Let
\[
    V_i(\mathbf{X})
    =\bigl\{\mathbf{x}\in\Omega:\,d_\Omega(\mathbf{x},\mathbf{x}_i)
    \le d_\Omega(\mathbf{x},\mathbf{x}_j),\;j\neq i\bigr\}
\]
denote the geodesic Voronoi cell for a generator $x_i$.

By selecting the sensor positions
\(\{x_i\}\) as the Voronoi generators and using the geodesic distance function and \(g(x):=x^2\), the localization objective \eqref{eq:generic_J} is:
\begin{align}
    \label{eq:coverage_actual}
    \mathcal{J}(\mathbf{X})
    &=
    \sum_{i=1}^N
    \int_{V_i(\mathbf{X})}
    \rho(\mathbf{q})
    d_\Omega^2(\mathbf{q},\mathbf{x}_i)
    \mathrm{d}\mathbf{q}, 
\end{align}
i.e., the expected squared geodesic distance from a random point within the domain to the nearest sensor, weighted by the importance, \(\rho\). 

In the original paper \cite{cortes2004coverage}, $\rho(\mathbf{q})$ is an \emph{a priori} specified field, typically empirically via heuristics. In contrast, for the current work $\rho$ is prescribed via $\mathcal{S}^h_\theta(z)$ so that the localization objective weight is constrained to satisfy a physics-based PDE. Unlike the original work, $\rho$ is now an implicit function of the sensor locations, and so the original analysis in \cite{cortes2004coverage} no longer holds. We present analysis following from this modification in Sections \ref{sec:methodology_convergence}, \ref{sec:methodology_model_based}, and \ref{sec:methods_source_conv}.

The aim of our sensor placement approach is to minimize \eqref{eq:coverage_actual} over a given, unknown source term \(\rho_{\mathrm{true}}\).
In the case of a single event as the source term with a probability distribution \(\rho_{true}\), the localization objective corresponds to maximizing the observation probability over the domain for sensors whose sensing capabilities degrade (e.g. due to noise) as \(d_\Omega(\cdot)^2\). When \(\rho \) corresponds to the location of an emitter, this can be viewed as improving the observability of the source, and therefore is expected to improve the conditioning of the inverse problem.

To update the sensor locations in the conventional Lloyd's implementation we define a gradient flow towards the centroid, \(c_i\), with a relaxation \( \alpha \in (0,1] \), 
\begin{equation}
    \label{eq:cont_lloyd_update}
    \dot{x}_i = -\alpha(x_i-c_i).
\end{equation}
To modify this for discrete updates the nonconvex setting and enable coverage on complex geometries, we consider the generalized geodesic path, $\gamma$, between the sensor and the centroid.
We define the update as
\begin{equation}
    \label{eq:relaxed-update}
    \mathbf{x}_i^{(k+1)} := \gamma_{\mathbf{x}^k_i\to \mathbf{c}^k_i}^{(k)}(\alpha),
\end{equation}
i.e. a movement along the geodesic path proportional to $\alpha$. This reduces to \eqref{eq:cont_lloyd_update} in the convex setting.
\noindent This formulation preserves feasibility by construction since \( \gamma_i^{(k)}(t) \in V_i \subset \Omega \) for all \( t \in [0,1] \), and provides a tunable tradeoff between convergence speed and robustness in the presence of uncertainty. In practice, the path \( \gamma_i^{(k)} \) is discretized using fast marching and interpolated using geodesic barycentric coordinates on the mesh.

The geodesic centroid of a Voronoi region \( V_i \subset \Omega \) is defined as the intrinsic mean:
\begin{equation}
  \label{eq:mean}
  \mathbf{c}_{i}
  =
  \arg\min_{\mathbf{y} \in \Omega}
  \int_{V_{i}}
    d_{\Omega}(\mathbf{y},\mathbf{x})^{2}\rho_{\theta}(\mathbf{x})
    \,\mathrm{d}\mathbf{x},
\end{equation}
which is guaranteed to be unique on Hadamard manifolds~\cite{afsari2011riemannian}. In Euclidean space, this reduces to the standard centroid:
\[
    \label{eq:centroid}
    \mathbf{c}_i = \frac{1}{M_i} \int_{V_i} \mathbf{x}\, \rho_\theta(\mathbf{x})\, \mathrm{d}\mathbf{x}, \quad M_i = \int_{V_i} \rho_\theta(\mathbf{x})\;\mathrm{d}\mathbf{x}.
\]

Since computing \eqref{eq:mean} is expensive, we approximate it by computing the Euclidean centroid \( \bar{c}_i \) and projecting it into the geodesic Voronoi cell if it lies outside the region:
\begin{equation}
    \label{eq:projected-centroid}
    \mathbf{c}_i := \arg\min_{\mathbf{x} \in V_i} \; \| \mathbf{x} - \bar{\mathbf{c}}_i \|_2^2.
\end{equation}
This ensures feasibility and, in convex geometries, reproduces the classic Lloyd update. Algorithm~\ref{alg:inner} describes the discrete, geodesic Lloyd algorithm for a given set of sensor locations and a density.
Geodesic distances are computed using the exact discrete geodesic algorithm of Mitchell–Mount–Papadimitriou as implemented in \texttt{pygeodesic}~\cite{mitchell1987discrete}; integrals are evaluated by quadrature over the mesh.

\begin{algorithm}[H]
\caption{\textsc{DiscreteLloyd}\,$(X,\rho,M,\alpha)$}
\label{alg:inner}
\begin{algorithmic}[1]
\State \textbf{Input:}
       sensor set $X=\{x_i\}$,\; density $\rho$,\; iterations $m$
\For{$k = 1,\dots,m$}
    \State Construct geodesic Voronoi diagram
           $\{V_i\}$ of $X$ in $\Omega$
    \For{$i = 1,\dots,N$ \textbf{in parallel}}
        \State Compute Euclidean centroid
               $\bar{\mathbf{c}}_i\gets\frac{1}{M_i}
               \displaystyle\int_{V_i}\!\mathbf{x}\,\rho(\mathbf{x})\,d\mathbf{x}$
        \State Project onto feasibility (if $\Omega$ non-convex)
                $\mathbf{c}_i \gets \arg\min_{\mathbf{x} \in V_i} \; \| \mathbf{x} - \bar{\mathbf{c}}_i \|_2^2.$
        \State Compute geodesic path
                $\gamma_i$ from $\mathbf{x}_i$ to $\mathbf{c}_i$
        \State Update
                $\mathbf{x}_i^{\,\text{new}}
                \gets
                \gamma_i^{(k)}(\alpha)$
    \EndFor
\EndFor
\State \textbf{Return} updated set of sensor positions $X$
\end{algorithmic}
\end{algorithm}

\subsection{Convergence of source localization algorithms}
\label{sec:methodology_convergence}

We now examine the convergence of Lloyd’s update rule when the importance field varies externally over time. Lloyd's algorithm is known to produce a monotonic energy descent of the coverage functional for a convex geometry with a fixed importance, guaranteeing convergence to a CVT \cite{cortes2004coverage}. We must consider the more complex setting where the importance density is a time varying function of the sensor locations, as is the case in our application. We establish a sufficient condition under which convergence is preserved despite variations.

We define the Lloyd update rule for a set of generator positions $x_k$ and importance field $\rho_k$ as $G(x_k, \rho_k) =: x_{k+1}$, and note that in this general setting the algorithm implemented in $G$ can be $m$ steps of the the discrete Lloyd described in Algorithm \ref{alg:inner}, or some time integration in a continuous implementation. We consider the corresponding set of importance updates $\{\rho_k\}_{k=1}^K$ coming from a model $\rho_{k+1}:=\rho_\theta(x_{k})$ with an initial estimate $\rho_0$, and the resulting set generator positions be $\{x_k = \{x_1^k, \ldots, x_n^k\}\}_{k=1}^K$. Let $\mathcal{J}_{\rho}(x)$ denote the coverage energy functional with importance $\rho$ and generator locations $x$.

We assume throughout this section that the current sensor configuration \( x_k \) is suboptimal with respect to the current importance field \( \rho_k \) (i.e., not a centroidal Voronoi configuration). This ensures that the Lloyd update yields strict energy descent.
We define the strictly decreasing change in the coverage energy $\mathcal{J}$ due to the Lloyd update as:
\[
    \Delta \mathcal{J}_k^{\mathrm{Lloyd}} := \mathcal{J}_{\rho_k}(x_{k+1}) - \mathcal{J}_{\rho_k}(x_{k}) < 0,
\]
additionally we define the change in the coverage energy due to the model update
\[
    \Delta \mathcal{J}_k^{\mathrm{model}} := \mathcal{J}_{\rho_{k+1}}(x_{k+1}) - \mathcal{J}_{\rho_k}(x_{k+1}).
\]

\begin{theorem}[Monotonic decrease of model-relative coverage]
\label{thm:1}
We consider the general setting with no assumption of the dependence of $\rho_k$ on $x_k$ for this theorem. 
If the importance update satisfies
\[
    \label{eq:theorem1_condition}
    \|\rho_{k+1} - \rho_k\|_{L^\infty} < \frac{-\Delta \mathcal{J}_k^{\mathrm{Lloyd}}}{C_\Omega},
\]
for some constant \( C_\Omega > 0 \) depending only on the domain geometry, then the combined two-step update satisfies
\[
    \mathcal{J}_{\rho_{k+1}}(x_{k+1}) < \mathcal{J}_{\rho_k}(x_k),
\]
and the model-relative coverage energy is strictly decreasing.
\end{theorem}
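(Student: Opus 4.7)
The plan is to split the total change in coverage energy between iterates $k$ and $k{+}1$ into a sensor-motion effect at fixed importance $\rho_k$ and an importance-change effect at fixed sensor configuration $x_{k+1}$, and then to dominate the latter by the hypothesized $L^\infty$ bound. Writing
\[
\mathcal{J}_{\rho_{k+1}}(x_{k+1}) - \mathcal{J}_{\rho_k}(x_k) = \Delta\mathcal{J}_k^{\mathrm{Lloyd}} + \Delta\mathcal{J}_k^{\mathrm{model}},
\]
the theorem reduces to showing $\Delta\mathcal{J}_k^{\mathrm{model}} < -\Delta\mathcal{J}_k^{\mathrm{Lloyd}}$. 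The Lloyd term is strictly negative by the standing non-centroidal assumption of the section, so no additional analysis of the update map $G$ is required at this stage; the content of the theorem is entirely in bounding $\Delta\mathcal{J}_k^{\mathrm{model}}$.

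To bound the model term, I would use that the Voronoi cells $\{V_i(x_{k+1})\}$ partition $\Omega$, so that $\mathcal{J}_\rho(x_{k+1})$ can be rewritten pointwise as $\int_\Omega \rho(q)\,\min_i d_\Omega^2(q,x_i^{k+1})\,dq$. Subtracting the two $\rho$ instances and applying Hölder gives
\[
|\Delta\mathcal{J}_k^{\mathrm{model}}| = \Bigl|\int_\Omega (\rho_{k+1}-\rho_k)(q)\,\min_i d_\Omega^2(q,x_i^{k+1})\,dq\Bigr| \le C_\Omega\,\|\rho_{k+1}-\rho_k\|_{L^\infty},
\]
where one may take
\[
C_\Omega := \int_\Omega \sup_{y\in\overline{\Omega}} d_\Omega^2(q,y)\,dq,
\]
which is finite because $\Omega$ is bounded and has finite geodesic diameter, and which depends only on the geometry of $\Omega$ (in particular, not on $N$, $\rho$, or the sensor configuration). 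Substituting the hypothesis $\|\rho_{k+1}-\rho_k\|_{L^\infty} < -\Delta\mathcal{J}_k^{\mathrm{Lloyd}}/C_\Omega$ then yields $|\Delta\mathcal{J}_k^{\mathrm{model}}| < -\Delta\mathcal{J}_k^{\mathrm{Lloyd}}$, so $\Delta\mathcal{J}_k^{\mathrm{Lloyd}}+\Delta\mathcal{J}_k^{\mathrm{model}} < 0$, and the conclusion follows from the decomposition identity.

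The main obstacle is identifying a clean, configuration-independent $C_\Omega$ in the non-convex geodesic setting. A naive choice such as $|\Omega|\,\mathrm{diam}(\Omega,d_\Omega)^2$ works but is unnecessarily loose; the displayed form is tighter and still purely geometric, and I would verify its finiteness directly from the bounded-domain hypothesis together with the fact that $d_\Omega$ is a metric on a compact set. A secondary subtlety is notational: $\Delta\mathcal{J}_k^{\mathrm{Lloyd}}$ must be the Lloyd update evaluated at the current importance $\rho_k$ (not $\rho_{k+1}$), and the Voronoi cells on both sides of the $\Delta\mathcal{J}_k^{\mathrm{model}}$ integral must be those generated by $x_{k+1}$; once these conventions are pinned down, the remaining decomposition and Hölder bound are routine and the full proof should fit in well under a page.
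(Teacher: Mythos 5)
Your proposal is correct and follows essentially the same route as the paper's proof: the identical two-term decomposition into $\Delta\mathcal{J}_k^{\mathrm{Lloyd}}+\Delta\mathcal{J}_k^{\mathrm{model}}$, followed by an $L^\infty$--$L^1$ bound on the model term against a purely geometric constant. Your $C_\Omega=\int_\Omega\sup_{y\in\overline{\Omega}}d_\Omega^2(q,y)\,dq$ is a (slightly looser but configuration-free) upper bound for the paper's $\sup_{x\in\Omega^n}\sum_i\int_{V_i(x)}d_\Omega(q,x_i)^2\,dq$, and either choice closes the argument.
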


\begin{proof}
The total energy change across one iteration is:

\begin{align*}
    \mathcal{J}_{\rho_{k+1}}(x_{k+1}) - \mathcal{J}_{\rho_k}(x_k)
    &= \mathcal{J}_{\rho_{k+1}}(x_{k+1}) - \mathcal{J}_{\rho_k}(x_{k+1})
    + \mathcal{J}_{\rho_k}(x_{k+1}) - \mathcal{J}_{\rho_k}(x_k)
    \\
    &= \Delta\mathcal{J}_k^{\mathrm{model}} + \Delta\mathcal{J}_k^{\mathrm{Lloyd}}.
\end{align*} 

To bound the model update term $\Delta\mathcal{J}_k^{\mathrm{model}}$, we use the linearity of $\mathcal{J}$ in $\rho$:
\begin{align*}
    |\Delta\mathcal{J}_k^{\mathrm{model}}| 
    &= \left| \sum_{i=1}^n \int_{V_i(x_{k+1})} \left( \rho_{k+1}(q) - \rho_k(q) \right) d_\Omega(q, x_i^{k+1})^2 \, dq \right|
    \\
    &\leq \|\rho_{k+1} - \rho_k\|_{L^\infty} \sum_{i=1}^n \int_{V_i(x_{k+1})} d_\Omega(q, x_i^{k+1})^2 \, dq.
\end{align*}

Define the geometry-dependent constant
\[
    C_\Omega := \sup_{x \in \Omega^n} \sum_{i=1}^n \int_{V_i(x)} d_\Omega(q, x_i)^2 \, dq,
\]
which is finite on $\Omega$ with bounded diameter. 
Then,
\[
    \Delta\mathcal{J}_k^{\mathrm{model}} \leq |\Delta\mathcal{J}_k^{\mathrm{model}}| \leq C_\Omega \|\rho_{k+1} - \rho_k\|_{L^\infty}.
\]

Combining the two terms, the total energy change satisfies:
\[
    \mathcal{J}_{\rho_{k+1}}(x_{k+1}) - \mathcal{J}_{\rho_k}(x_k)
    \leq C_\Omega \|\rho_{k+1} - \rho_k\|_{L^\infty} + \Delta\mathcal{J}_k^{\mathrm{Lloyd}}.
\]

By the assumption of the theorem, we have:
\[
    \|\rho_{k+1} - \rho_k\|_{L^\infty} < \frac{-\Delta\mathcal{J}_k^{\mathrm{Lloyd}}}{C_\Omega},
\]
which implies that the right-hand side is strictly negative due to the strict Lloyd energy descent. Therefore,
\[
    \mathcal{J}_{\rho_{k+1}}(x_{k+1}) < \mathcal{J}_{\rho_k}(x_k),
\]
and the model-relative coverage energy is strictly decreasing.
\end{proof}

\paragraph{Remark 1} 
We view $\Delta J^{\mathrm{Lloyd}}$ as a buffer which increases the tolerance to changes in the importance density, $\rho$.
The isolation of the sensor movement from the importance updates gives an elegant opportunity to control this buffer by simply increasing the number of inner Lloyd steps taken.
For a given proposed importance update $\rho_{new}$ we can guarantee the condition in \eqref{eq:theorem1_condition} to be satisfied by increasing $m$, so long as the sub-optimality of the initial configuration is greater than the change in energy due to the importance update, i.e.
\[
    \|\rho_{k+1} - \rho_k\|_{L^\infty} < \frac{\mathcal{J}_{\rho_k}(x_{k}) - \mathcal{J}_{\rho_k}(x^*)}{C_\Omega},
\]
where $x^*$ are the optimal generator positions, \(x^*:=\text{argmin}_x \mathcal{J}_{\rho_k}(x)\).

\paragraph{Remark 2}
\label{rem:gurantee_imp_regularity}
We note that we can alternatively ensure the regularity of the importance updates and therefore guarantee Theorem 1 is satisfied if we employ an optimized convex update to the importance field given a candidate update $\rho_{\mathrm{new}}$ as:
\[
    \rho_{k+1} = \alpha \rho_k + (1-\alpha) \rho_{\mathrm{new}}
\]
for
\[
    \alpha = \inf\left\{\alpha' \in [0,1] | \; \rho_{k+1}\; \text{satisfies \eqref{eq:theorem1_condition}}\right\}.
\]
This provides a guaranteed valid update in all $\rho_{\mathrm{new}}$ for $\alpha=0 \implies \norm{\rho_{k+1} - \rho_k} = 0$.

\subsection{Model driven importance updates}
\label{sec:methodology_model_based}

We now consider the case where the importance field is a function of the sensor locations, as in our application, where \(\rho(x)\) is parameterized with the CNWF source network on the learned basis functions.

\begin{theorem}
[Monotonic decrease of true coverage objective]
\label{thm:2}
Let: $\rho(x)$ be a map from sensor positions to an estimated importance field,
$\mathcal{J}_{\rho}(x)$ denote the model-relative coverage functional, and $\mathcal{J}_{\rho_{\mathrm{true}}}(x)$ be the true coverage functional. Recall from Theorem \ref{thm:1} that $\mathcal{J}$ is Lipschitz in $\rho$ with constant $C_\Omega$.

If the condition
\[
     \Delta\mathcal{J}_k^{\mathrm{Lloyd}} < 
     -2C_\Omega \|\rho(x_k) - \rho_{\mathrm{true}}\|_\infty
\]
is satisfied, and the generator position updates follow a Lloyd descent,
then the true coverage energy strictly decreases:
\[
    \mathcal{J}_{\rho_{\mathrm{true}}}(x_{k+1}) < \mathcal{J}_{\rho_{\mathrm{true}}}(x_k).
\]
\end{theorem}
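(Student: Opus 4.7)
The plan is to mirror the telescoping decomposition used in the proof of Theorem~\ref{thm:1}, but evaluated at the \emph{true} density rather than the model density. The key reusable ingredient is the Lipschitz-in-$\rho$ estimate $|\mathcal{J}_\rho(x)-\mathcal{J}_{\rho'}(x)|\le C_\Omega\|\rho-\rho'\|_\infty$ already established there, which I would treat as a black-box perturbation bound.

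First, I would write the change in true coverage across one iteration as a telescoping sum that inserts the model-relative functional $\mathcal{J}_{\rho(x_k)}$ at both endpoints:
\begin{align*}
    \mathcal{J}_{\rho_{\mathrm{true}}}(x_{k+1}) - \mathcal{J}_{\rho_{\mathrm{true}}}(x_k)
    &= \bigl[\mathcal{J}_{\rho_{\mathrm{true}}}(x_{k+1}) - \mathcal{J}_{\rho(x_k)}(x_{k+1})\bigr] \\
    &\quad + \bigl[\mathcal{J}_{\rho(x_k)}(x_{k+1}) - \mathcal{J}_{\rho(x_k)}(x_k)\bigr] \\
    &\quad + \bigl[\mathcal{J}_{\rho(x_k)}(x_k) - \mathcal{J}_{\rho_{\mathrm{true}}}(x_k)\bigr].
\end{align*}
Because the Lloyd step that produced $x_{k+1}$ from $x_k$ was driven by the model density $\rho(x_k)$, the middle bracket is exactly $\Delta\mathcal{J}_k^{\mathrm{Lloyd}}$, which is strictly negative under the standing suboptimality assumption.

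Next, I would dominate each outer bracket by $C_\Omega\|\rho(x_k)-\rho_{\mathrm{true}}\|_\infty$ via the pointwise Lipschitz bound, once at $x=x_{k+1}$ and once at $x=x_k$; crucially, the supremum definition of $C_\Omega$ used in Theorem~\ref{thm:1} makes the same constant valid at both configurations. Summing the three contributions gives
\[
    \mathcal{J}_{\rho_{\mathrm{true}}}(x_{k+1}) - \mathcal{J}_{\rho_{\mathrm{true}}}(x_k)
    \;\le\; \Delta\mathcal{J}_k^{\mathrm{Lloyd}} + 2 C_\Omega \|\rho(x_k) - \rho_{\mathrm{true}}\|_\infty,
\]
and the hypothesis of Theorem~\ref{thm:2} is precisely what forces this right-hand side to be strictly negative.

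The main obstacle I anticipate is not analytical but careful bookkeeping: identifying which density drives the Lloyd step (namely $\rho(x_k)$, the only density the algorithm has access to when generating $x_{k+1}$) and explaining why the factor of $2$ multiplying $C_\Omega$ in the hypothesis is tighter than the factor of $1$ appearing in Theorem~\ref{thm:1}. The factor of $2$ arises because the fixed reference density $\rho_{\mathrm{true}}$ now differs from the algorithm's density at \emph{both} endpoints, whereas in Theorem~\ref{thm:1} the perturbation was introduced at only a single configuration. Remark~1 from the previous theorem continues to apply: increasing the number of inner Lloyd iterations enlarges $|\Delta\mathcal{J}_k^{\mathrm{Lloyd}}|$ and thus widens the regime of admissible model error.
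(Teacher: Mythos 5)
Your proposal is correct and follows essentially the same route as the paper's proof: the same three-term telescoping decomposition through $\mathcal{J}_{\rho(x_k)}$ at both endpoints, the same double application of the Lipschitz-in-$\rho$ bound with constant $C_\Omega$, and the same final combination with the hypothesis to force strict negativity. Your explanation of why the factor of $2$ appears (the reference density differs from the algorithm's density at both configurations) is a correct and slightly more explicit account of the bookkeeping than the paper gives.
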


\begin{proof}
We decompose the change in true coverage energy into model errors at steps $k$, $k+1$ and the descent step:
\begin{align*}
    \Delta\mathcal{J}_k=&\mathcal{J}_{\rho_{\mathrm{true}}}(x_{k+1}) - \mathcal{J}_{\rho_{\mathrm{true}}}(x_k)
    \\
    =& \left[\mathcal{J}_{\rho_{\mathrm{true}}}(x_{k+1}) - \mathcal{J}_{\rho_k}(x_{k+1})\right]
    + \Delta \mathcal{J}_k^{\mathrm{Lloyd}}
    + \left[\mathcal{J}_{\rho_k}(x_k) - \mathcal{J}_{\rho_{\mathrm{true}}}(x_k)\right].
\end{align*}

Applying the Lipschitz continuity in $\rho$:
\[
    |\mathcal{J}_{\rho_{\mathrm{true}}}(x_k) - \mathcal{J}_{\rho_k}(x_k)| \leq C_\Omega \|\rho_k - \rho_{\mathrm{true}}\|_\infty,
\]
\[
    |\mathcal{J}_{\rho_{\mathrm{true}}}(x_{k+1}) - \mathcal{J}_{\rho_k}(x_{k+1})| \leq C_\Omega \|\rho_k - \rho_{\mathrm{true}}\|_\infty.
\]

Thus
\[
    \mathcal{J}_{\rho_{\mathrm{true}}}(x_{k+1}) - \mathcal{J}_{\rho_{\mathrm{true}}}(x_k)
    \leq \Delta \mathcal{J}_k^{\mathrm{Lloyd}} + 2C_\Omega \|\rho_k - \rho_{\mathrm{true}}\|_\infty.
\]

If the condition in the theorem is satisfied and the combined prediction error is small relative to the Lloyd's descent energy change,
\[
    2C_\Omega \|\rho_k - \rho_{\mathrm{true}}\|_\infty  < \Delta \mathcal{J}_k^{Lloyd},
\]
then
\[
    \mathcal{J}_{\rho_{\mathrm{true}}}(x_{k+1}) < \mathcal{J}_{\rho_{\mathrm{true}}}(x_k),
\]
and thus we achieve strict decrease in the true coverage energy.

\end{proof}

\begin{theorem}
\label{thm:3}
(Monotonic decrease in prediction error bound)
If the generator updates provide a strict decrease in the coverage energy (e.g., if Theorem 2 is satisfied),
\[
    \mathcal{J}_{\rho_{\mathrm{true}}}(x_{k+1}) < \mathcal{J}_{\rho_{\mathrm{true}}}(x_k),
\]
and we additionally assume that the prediction error is bounded by a positive constant multiple of the true coverage energy, $C_\Phi$, such that,
\[
    \|\rho(x_k) - \rho_{\mathrm{true}}\|_\infty \leq C_\Phi\mathcal{J}_{\rho_{\mathrm{true}}}(x_k).
\]
It follows that the model prediction error bound is strictly decreasing,
\[
    \|\rho(x_{k+1}) - \rho_{\mathrm{true}}\| \leq C_\Phi \mathcal{J}_{\rho_{\mathrm{true}}}(x_{k+1}) < C_\Phi \mathcal{J}_{\rho_{\mathrm{true}}}(x_k).
\]
\end{theorem}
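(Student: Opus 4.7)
The plan is to chain together the two hypotheses directly; the statement is essentially a transitivity argument, so the proof is short and the real content lies in unpacking why the chain works step by step.

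First I would apply the prediction error bound assumption, but evaluated at iteration $k+1$ rather than $k$. Since the hypothesis is stated as a property holding at arbitrary iterates, instantiating it at $k+1$ yields
\[
    \|\rho(x_{k+1}) - \rho_{\mathrm{true}}\|_\infty \;\leq\; C_\Phi\,\mathcal{J}_{\rho_{\mathrm{true}}}(x_{k+1}).
\]
This is the step that converts the coverage decrease into a statement about prediction error; without the assumption that prediction error is controlled by the true coverage energy at every iterate, the theorem would have no content.

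Second, I would invoke the hypothesis of the theorem, namely that the Lloyd-style descent has produced strict decrease $\mathcal{J}_{\rho_{\mathrm{true}}}(x_{k+1}) < \mathcal{J}_{\rho_{\mathrm{true}}}(x_k)$. Since $C_\Phi > 0$, multiplying preserves the strict inequality, giving $C_\Phi \mathcal{J}_{\rho_{\mathrm{true}}}(x_{k+1}) < C_\Phi \mathcal{J}_{\rho_{\mathrm{true}}}(x_k)$. Concatenating with the previous display yields the full chain claimed in the statement.

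There is no real obstacle in the mathematics: the argument is a two-line substitution, and the work has already been done by Theorem \ref{thm:2}, which supplies the strict coverage decrease, and by the posited bound relating prediction error to coverage. The only thing worth flagging in writing the proof is that the bound on $\|\rho(x_{k+1}) - \rho_{\mathrm{true}}\|_\infty$ uses the assumption at $k+1$, not at $k$; this is a mild but necessary uniformity requirement on the hypothesis, and I would note it briefly so the reader sees why the theorem closes the loop between the two previous monotonicity results and a genuinely decreasing prediction error bound across iterates.
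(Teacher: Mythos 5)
Your proof is correct and is essentially identical to the paper's: the paper likewise restates the bound as holding for all $x$ (resolving the same uniformity point you flag, since the theorem statement only displays it at $k$), instantiates it at $x_{k+1}$, and multiplies the strict coverage decrease by $C_\Phi>0$ to close the chain.
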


\begin{proof}
    Assume there exists a
constant $C_\Phi > 0$, such that for all $x$,
\[
    \|\rho(x) - \rho_{\mathrm{true}}\| \leq C_\Phi \mathcal{J}_{\rho_{\mathrm{true}}}(x).
\]

Under the strict energy decrease in Theorem 2,
\[
    C_\Phi \mathcal{J}_{\rho_{\mathrm{true}}}(x_{k+1}) < C_\Phi \mathcal{J}_{\rho_{\mathrm{true}}}(x_k),
\]
and therefore
\[
    \|\rho(x_{k+1}) - \rho_{\mathrm{true}}\|_\infty \leq C_\Phi \mathcal{J}_{\rho_{\mathrm{true}}}(x_{k+1}) < C_\Phi \mathcal{J}_{\rho_{\mathrm{true}}}(x_k).
\]
\end{proof}

\paragraph{Remark 3} We interpret the constant \(C_\Phi\) as a relation between the informativeness of the sensor configuration, and the well-posedness of the inverse problem. For a sufficiently well-trained model, we expect this to correspond to improved predictions, or a reduced error bound. We note that this condition is dependent on model training and construction and cannot generally be guaranteed.

We conclude that if Theorems \ref{thm:1}, \ref{thm:2}, and \ref{thm:3} are satisfied, reducing the model-relative coverage energy results in both a reduction in the true coverage energy, and a strict improvement in model prediction error bound.
This establishes a positive feedback loop for the true coverage objective, where improvements in sensor positioning result in improved prediction error bounds, which may in turn result in improved sensor placement relative to the true coverage goal, provided the error bound, \(C_\Phi\), is sufficiently tight, which can be verified empirically.

In order for these to be satisfied we require that the importance updates satisfy a regularity condition over iterations, that the importance predictions satisfy an accuracy condition compared to the true field, and that the true coverage energy can be used to bound the prediction error. As described in Remark \ref{rem:gurantee_imp_regularity}, the regularity condition can be satisfied by construction, but the other conditions are dependent on model training and can not be easily guaranteed in practical settings, so we rely on empirical confirmation of the convergence of the prediction error, model-relative coverage, and true coverage energies to validate these assumptions are approximately satisfied in our experimental setting. 

\subsection{Convergence to source location}
\label{sec:methods_source_conv}
While the previous analysis established that coverage control with model updates can reduce the coverage cost, we now address the stronger property of convergence of a sensor to the true source location. 
We study a simplified but representative setting in which the source is a Dirac delta at $x^*$ and the learned model produces a smooth bump function centered near $x^*$ with a uniform background value, sharpening towards a Dirac as the nearest sensor approaches the source location. 
In this setting, we show that under continuous Lloyd dynamics the minimal distance between sensors and the source converges to zero exponentially fast, thereby closing the gap between coverage minimization and source convergence.

\begin{theorem} \label{thm:conv}
Let $\Omega \subset \mathbb R^2$ be a convex domain equipped with the Euclidean distance.  Let $x^* \in \Omega$ denote the true source location and 
\[
    m(t) := \min_{i} \|x_i(t) - x^*\|
\]
the minimal distance from any sensor $x_i$ to $x^*$. 
Suppose the learned importance function $\rho_\theta(\cdot \mid z)$ satisfies two conditions: 
\begin{enumerate}
    \item If $m(X) = 0$, then $\rho(X) = \delta(x^*)$.
    \item Centroids vary in a locally Lipschitz continuous manner that is contractive, meaning that there exists a critical distance $m_\mathrm{c}>0$ and $r \in (0,1)$ such that for all configurations $X_1$ and $X_2$ with respective minimal distances $m_1, m_2 < m_\mathrm{c}$, 
\end{enumerate}
\begin{equation}
        \|c_1(\rho(X_1))  - c_2(\rho(X_2)\| \le r|m_1 - m_2|. \label{lipbound}
    \end{equation}
    Then, under the continuous Lloyd dynamics in \eqref{eq:cont_lloyd_update},
    the minimal distance of sensors to $x^*$ converges to zero, with
    \[
        m(t) \le e^{\alpha(r-1)t} m(0).
    \]
\end{theorem}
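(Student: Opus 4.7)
The plan is to combine the two hypotheses into a single pointwise estimate on the centroid of the Voronoi cell containing $x^*$, then convert that estimate into a differential inequality for the Lyapunov function $V(t):=\|x_{i^*(t)}(t)-x^*\|^2 = m(t)^2$, where $i^*(t)$ indexes the sensor currently nearest to $x^*$. Exponential contraction at rate $\alpha(1-r)$ will then follow by Gr\"onwall's lemma. Throughout I assume $m(0)<m_\mathrm{c}$ so that \eqref{lipbound} is available; forward invariance of the contraction regime is verified at the end.

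The first step is a \emph{comparison argument} that converts hypothesis 1 into a concrete bound on the centroid. Given a configuration $X$ with $m(X)=m<m_\mathrm{c}$, I construct the auxiliary configuration $X_0$ by translating the nearest sensor $x_{i^*}$ to $x^*$ and leaving every other sensor fixed; then $m(X_0)=0$ and $|m(X)-m(X_0)|=m$. Hypothesis 1 forces $\rho(X_0)=\delta_{x^*}$, so the centroid of the Voronoi cell containing $x^*$ (the cell of the sensor now located at $x^*$) equals $x^*$ exactly. Applying \eqref{lipbound} with $(X_1,X_2)=(X,X_0)$ then yields the essential estimate $\|c_{i^*}(\rho(X))-x^*\|\le r\,m$.

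The second step integrates this bound along the Lloyd flow. Writing $y(t):=x_{i^*}(t)-x^*$ and differentiating $\|y\|^2$ using $\dot{y}=-\alpha(y+x^*-c_{i^*})$ from \eqref{eq:cont_lloyd_update} gives
\begin{equation*}
\tfrac{d}{dt}\|y\|^2 \;=\; -2\alpha\|y\|^2 \;-\; 2\alpha\,y\cdot(x^*-c_{i^*}) \;\le\; -2\alpha\|y\|^2 + 2\alpha\|y\|\,\|c_{i^*}-x^*\|,
\end{equation*}
which by the centroid bound and $\|y\|=m$ collapses to $\tfrac{d}{dt}\|y\|^2\le -2\alpha(1-r)\|y\|^2$. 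Gr\"onwall then supplies $m(t)\le m(0)\,e^{\alpha(r-1)t}$, which is the stated exponential envelope.

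The main subtlety, and really the only nonroutine obstacle, is that the index $i^*(t)$ can switch in time: when a different sensor first enters the open ball of radius $m(t)$ around $x^*$, the argmin is reassigned and the identity of $i^*$ changes. At such switching times $m$ can only drop (a strictly closer sensor has appeared), so the envelope is preserved across the jump; the smooth calculation above then applies on each interval of constancy of $i^*$, and the differential inequality holds globally in the sense of the upper Dini derivative. Finally, because $m$ is monotone nonincreasing under the inequality and $m(0)<m_\mathrm{c}$ by assumption, the trajectory remains in the regime where \eqref{lipbound} applies, so the contraction regime is forward-invariant and the exponential bound extends to all $t\ge 0$.
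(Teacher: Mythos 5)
Your proof is correct and follows the same overall architecture as the paper's: both arguments first convert the two hypotheses into the pointwise centroid estimate $\|c^*(\rho(X))-x^*\|\le r\,m$ by comparing the current configuration against one with a sensor placed at $x^*$ (where hypothesis 1 forces $\rho=\delta_{x^*}$ and hence $c^*=x^*$), and then turn that estimate into the differential inequality $\dot m\le \alpha(r-1)m$. Where you diverge is in the execution of the second step. The paper sets up explicit coordinates, parameterizes the centroid by an angle $\theta^*$ and a length ratio $\ell\le r$, and uses the law of sines to compute $\dot m = -\alpha(1-\ell\cos\theta^*)m$ exactly, maximizing over $(\ell,\theta^*)$ to get the worst case $\ell=r$, $\theta^*=0$. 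You instead differentiate the squared distance $\|y\|^2$ with $y=x_{i^*}-x^*$ and apply Cauchy--Schwarz, which reaches the same bound $\tfrac{d}{dt}\|y\|^2\le -2\alpha(1-r)\|y\|^2$ without any coordinate geometry; this is shorter and arguably more transparent, though it discards the sharper angle-dependent decay rate that the paper's computation exposes. You are also more careful than the paper about the two technical points at the end: the paper handles the switching of the argmin index only with the remark that a minimum of differentiable functions is absolutely continuous and differentiable a.e., whereas you note explicitly that $m$ can only drop at switching times and that the inequality holds for the upper Dini derivative; and you make explicit the forward-invariance of the regime $m<m_\mathrm{c}$ (needed so that \eqref{lipbound} remains applicable for all $t$), which the paper leaves implicit. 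Both proofs tacitly assume $m(0)<m_\mathrm{c}$, which is not stated in the theorem but is clearly intended. No gaps.
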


\begin{proof}
    See Appendix \ref{app:conv_proof}.
\end{proof}

\emph{Remark.}
In Appendix \ref{app:conv_proof}, we also show a class of bump functions in which the convergence is global (holding under all initial configurations of sensors).  The  result above shows that under local Lipschitz continuity condition of the learned importance fields, Lloyd dynamics guarantees the convergence of sensors will converge to $x^*$. 
Continuity ensures stability of the centroid map and prevents spurious equilibria or oscillations; this highlights the need for importance fields that remain smooth and well-conditioned during closed-loop operation. This underscores the importance of a regular source prediction for the adaptive sensor placement and satisfies our second hypothesis~\ref{hyp:H2}.

\subsection{Two step optimization procedure}
\label{sec:methodology_two_step}

Motivated by the structure adopted in Theorem \ref{thm:1}, we adopt a two-scale optimization approach for minimizing the localization functional over the predicted importance fields. We consider the model updates to be a slowly varying external process, such that we can run an inner loop of Lloyd's updates with guaranteed convergence for a fixed importance, then we update the sensor measurements and model prediction.

Prior approaches to time‑varying coverage control \cite{lee2013controlled, lee2015multirobot, kennedy2019generalized, santos2019decentralized} design single\hyph loop control laws that guide agents toward centroids of externally defined, slowly evolving importance fields.
In contrast, our method employs a two‑scale update scheme: a fast inner Lloyd’s loop ensures descent of the model‑relative coverage cost for each fixed field, while a slower outer learner updates the importance function based on inference from sparse sensor observations. The density field $\rho$ is evolved via feedback from sensor placements and model predictions.

Explicitly, we define an adaptive sensor placement (outer) loop, wherein the sensor positions $X^{(k)}$ are fixed and the model predicted importance field $\rho_\theta(\cdot|z_k)$ is updated according to the sensor data in $z_k$. Then we fix the source field and run the discrete Lloyd's update loop for $m$ iterations.
This two scale optimization process is described in algorithm \ref{alg:outer}.

\begin{algorithm}[H]
\caption{Adaptive sensor placement with learned importance density}
\label{alg:outer}
\begin{algorithmic}[1]
\Require
    Initial sensor positions $X^{(0)}=\{x_i^{(0)}\}_{i=1}^{N}$,\;
    fixed CNWF parameters $\theta$,\;
    inner–loop length $m\ge1$, maximum outer iterations $K$
\vspace{.2em}
\For{$k = 0,1,\dots,K-1$}
    \State \textbf{Update data:}
           $z_k \;\gets\;
               \bigl\{u(x_i^{(k)}),\,\mathbf{v}(x_i^{(k)})\bigr\}_{i=1}^{N}$ 
    \State \textbf{Evaluate Source Model:}
           $f_k(x)\gets
            \mathcal{S}_\theta(z_k)$
    \State \textbf{Compute density:}
           $\rho_k(x)\gets
            \rho_\theta\!\bigl(x\,\big|\,z_k\bigr)$
           \Comment{\eqref{eq:density}}
    \State $X^{(k+1)} \gets
           \textsc{DiscreteLloyd}\!\bigl(X^{(k)},\rho_k,m\bigr)$
           \Comment{Algorithm\,\ref{alg:inner}}
\EndFor
\Ensure Final positions $X^{(K)}$
\end{algorithmic}
\end{algorithm}

\section{Numerical experiments}
\label{sec:exp}

\subsection{Data generation}
\label{sec:data_gen}
We define a domain $\Omega \subset \mathbb{R}^2$ discretized using a conforming triangular finite element mesh, $\phi^0$. Synthetic training data are generated by solving steady-state advection-diffusion equations of the form presented in \eqref{eq:advec-diff} by finite elements over the fine mesh discretization. We consider a compact, local source term parameterized by a bump function with a constant radius for each experiment,
\begin{equation}
    \beta_r(x)\;=\;
    \begin{cases}
        \displaystyle
        \exp\!\Bigl(-\dfrac{r^{2}}{\,r^{2}-\|x\|^{2}}\Bigr),
        & \|x\|<r,\\[1.2ex]
        0,
        & \|x\|\ge r.
    \end{cases}
\end{equation}
\noindent The bump function,  $\beta_r:\mathbb{R}^{d}\to[0,1]$ is $C^\infty$, equals $1$ at the origin, and has compact
support strictly inside the ball of radius~$r$.
For each training example, a random set of $N$ sensors is distributed across $\Omega$, returning noisy observations of the velocity and scalar field to form the full conditioning input, $z$. Together the solution, source, and conditioning inputs form the supervised training dataset, $\mathcal{D}:=\{z^{(k)}, u^{(k)}, f^{(k)}\}_{k=1}^{n_{samples}}$. Since all data is from FEM, we are able to quickly generate new samples on the fly to prevent overfitting. We specify an initial cache of $n_{samples}$ samples, and update a fixed number of randomly selected samples from this cache after every training iteration, effectively achieving an infinite dataset.

We restrict both source support and sensor placement to an interior subdomain. Let \( \delta > 0 \) be a user-defined exclusion distance, and define the valid interior domain
\[
    \Omega_{\mathrm{valid}} := \left\{ \mathbf{x} \in \Omega \;\middle|\; \operatorname{d}_\Omega(\mathbf{x}, \partial\Omega) \geq \delta \right\}.
\]
This constraint is also applied during the adaptive sampling.

Throughout, we impose a combination of homogeneous Dirichlet and Neumann boundary conditions on \( \partial\Omega \).
To ensure boundary conditions are strongly enforced on the reduced basis, we fix one of the trainable control volumes to represent the known boundary nodes and Dirichlet boundary conditions. Neumann boundary conditions are enforced implicitly.

We evaluate the proposed CNWF model across three benchmark scenarios of increasing geometric complexity: a simple circular domain with variable direction velocity, a regular mesh over the Gulf of Mexico with flow data coming from reanalysis simulations, and a maze geometry with simulated stokes flow following the path of the maze. These scenarios also correspond to different underlying hydrodynamic models, \(\mathcal{M}\) in \eqref{eq:ps1}, ranging from a nominal flow to a full physical model. In each case we used a fixed number of randomly distributed sensors, placed independently for each training and test sample.
Additionally, in each case we have a source of variability; the velocity direction for the circular geometry and the P\'eclet number for the Gulf and maze geometries. We note all cases have not been designed to match a particular physical scenario, but rather to convey the benefits of the CNWF localization method on challenging benchmarks.

\subsection{Baselines}
\label{sec:baselines}
To assess the benefit of structure preservation, we compare CNWF against two conventional black-box baselines: a standard multilayer perceptron (MLP) and an identical transformer model to the CNWF encoder with a linear layer to output, both trained to predict the source field directly from the sensor data with positive outputs from ReLU activations and normalized to a distribution following \eqref{eq:density}. The MLP provides a minimal complexity neural network parameterization, but as we will demonstrate struggles to capture the complex inverse process. The transformer baseline is identical to the encoder core of the CNWF model but lacks the head networks and structure preservation.

\section{Results}
\label{sec:results}

To verify the performance of the CNWF approach we first consider the inverse source identification problem alone then demonstrate its effectiveness when coupled with the adaptive sensor placement algorithm. We demonstrate accurate source reconstruction from limited, noisy measurements, and empirical convergence to the true source location under the adaptive sensor placement strategy. In all cases we evaluate the models on samples draw randomly from the same parameter distribution used to generate the training data.

\subsection{Inverse problem}
\label{sec:results_ip}
We demonstrate the ability of the CNWF model to produce accurate and interpretable source distribution predictions from the sparse sensor measurements by considering architected and intentionally ill-posed scenarios shown in Figures \ref{fig:circle_sweep1}, and \ref{fig:circle_sweep2}. In these scenarios we consider collinear sensor locations transversing the circular domain; when all sensors are located off the informative region within the advective plume, the SNR is lower and accurate prediction is infeasible. In both cases, the CNWF model produces smooth, interpretable, and qualitatively accurate source distributions, even in the ill-posed configurations. This also results in the lowest error as measured by the $W_2$ metric, in some cases by an order of magnitude over the baseline transformer. This confirms our first hypothesis~\ref{hyp:H1}, that the physical structure provides a useful inductive bias for training.

\begin{figure}[htbp]
    \centering
    \includegraphics[width=0.95\textwidth]{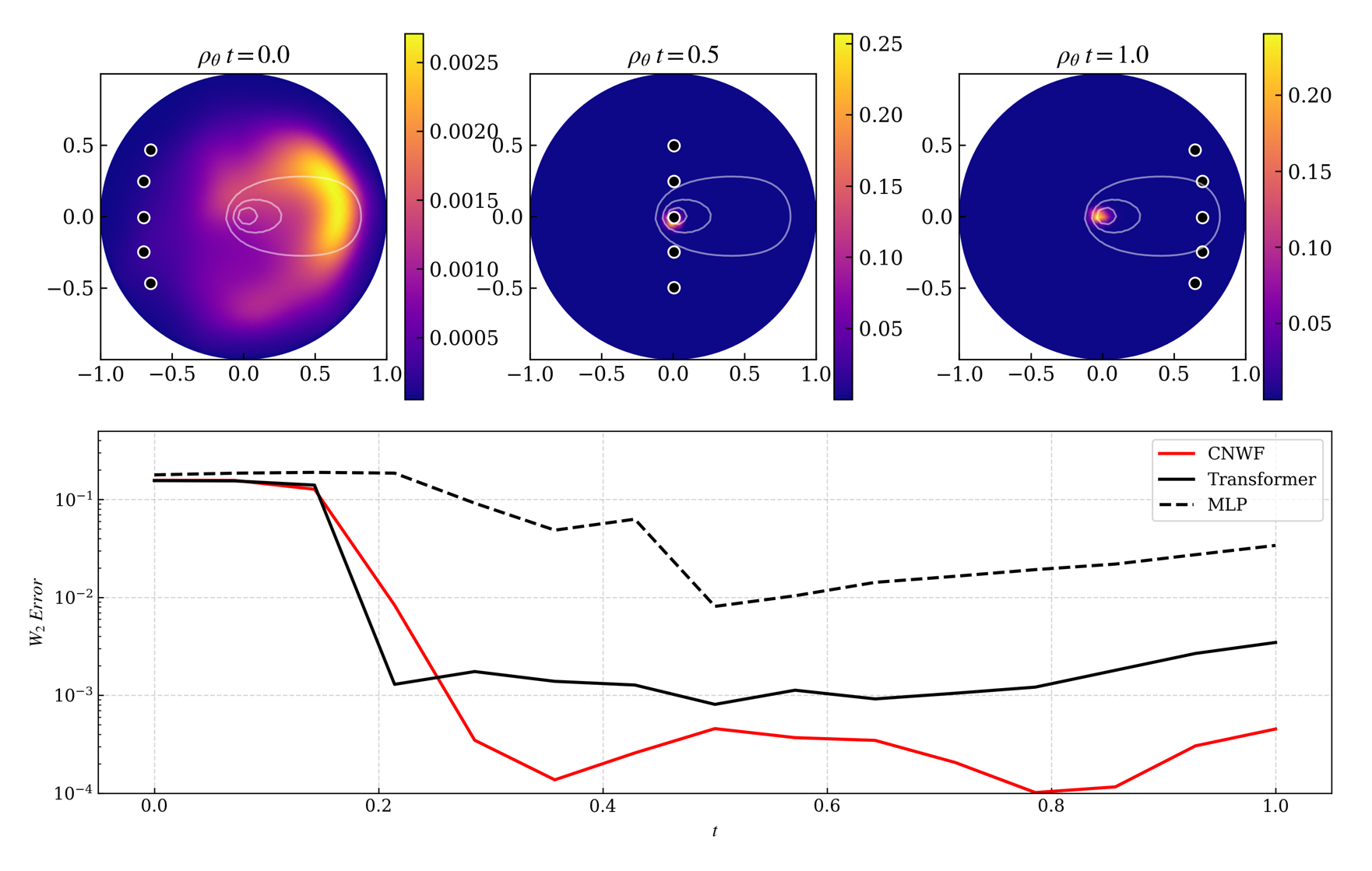}
    \caption{Sample model source predictions and predictions errors as sensors sweep across a horizontal plume. Five sensors are shown as black $o$'s, the scalar field is shown in a contour. We note that the initial configuration is ill-posed as none of the sensors located within the plume region.}
    \label{fig:circle_sweep1}
\end{figure}

\begin{figure}[htbp]
    \centering
    \includegraphics[width=0.95\textwidth]{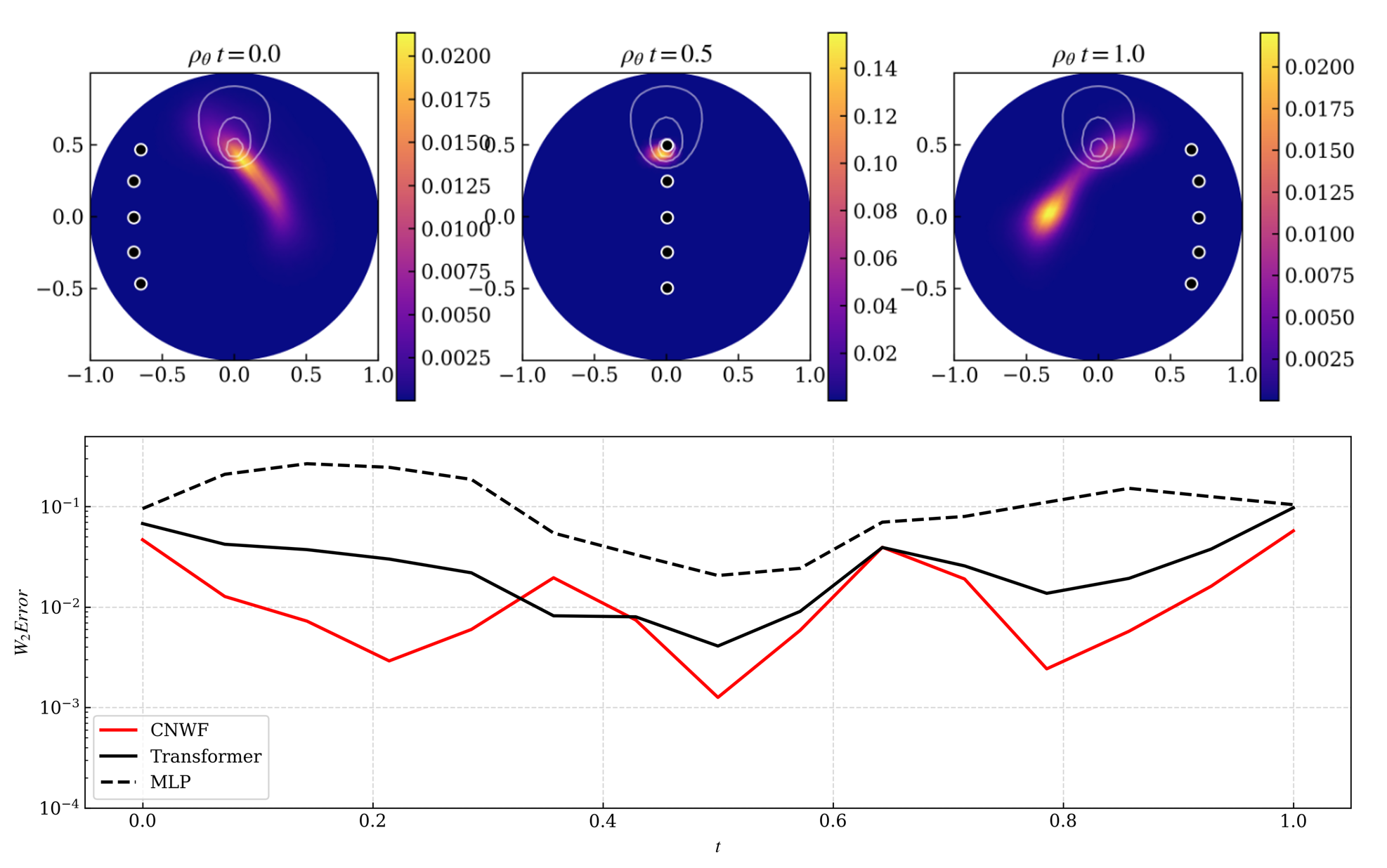}
    \caption{Sample model source predictions and predictions errors as sensors sweep across a vertical plume located near the top of the domain. Five sensors are shown as black $o$'s, the scalar field is shown in a contour. We note that although the initial and final configurations are ill-posed since none of the sensors located within the plume region, the CNWF model still produces an interpretable probability distributions over possible source locations.}
    \label{fig:circle_sweep2}
\end{figure}

In Figure \ref{fig:source_pred_hist} we compare the average predicted source density for each model, over 1024 random sensor configurations. We note that while the CNWF and transformer baseline both capture the area of highest probability, corresponding to the true source location, the CNWF localizes the source better on average, and provides a smoother source field.

\begin{figure}[htbp]
    \centering
    \includegraphics[width=0.65\textwidth]{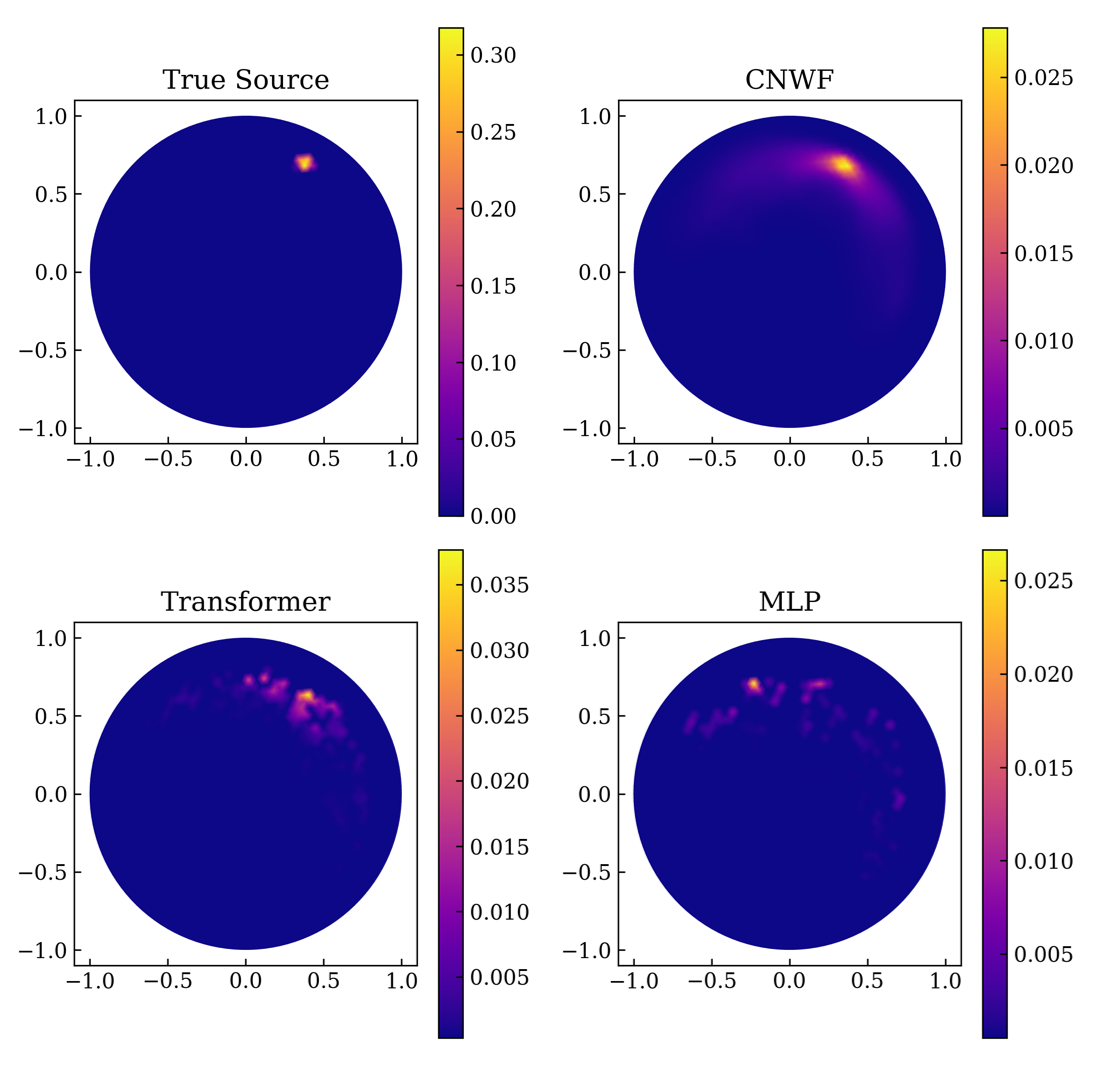}
    \caption{We consider the average source prediction over 2048 random velocity directions and sensor configurations for a fixed point source location. The true source term is shown in the top left panel. We note that the CNWF recovers a smooth and well behaved probability density, while the transformer and MLP baselines result in scattered predictions; this demonstrates the benefit of the adaptive basis for localization. }
    \label{fig:source_pred_hist}
\end{figure}

We hypothesis that the regularity of the CNWF generated predictions is the use of the adaptive basis functions, which implicitly bias the model towards better localization. 
We visualize the basis functions for a sample source and sensor configuration on the circular domain in Figure \ref{fig:gulf_POUS}. A shared basis is used to represent the source and solution during training. We note the correspondence of a single partition to the source location ($\psi_4^0$) while others correspond to the predicted downstream solution.

\begin{figure}[htbp]
    \centering
    \includegraphics[width=0.85\textwidth]{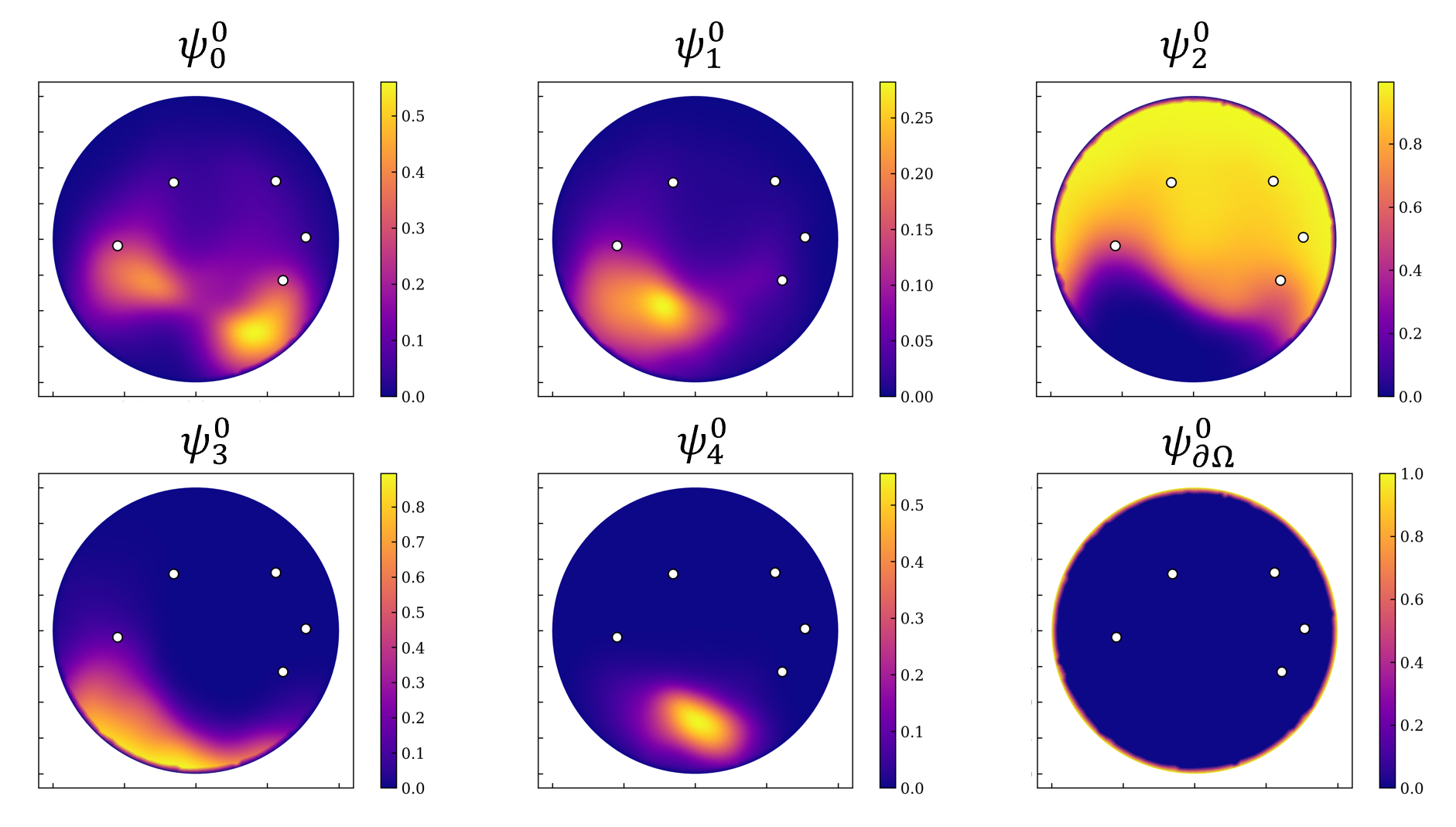}
    \caption{Learned basis functions for the CNWF model on the gulf domain. Each subplot corresponds to a learned partition-of-unity, $\varphi_i$, with the exception of $\psi_{\partial\Omega}^0$, which is a prescribed Dirichlet boundary partition. The sensor locations are indicated with white markers.}
    \label{fig:gulf_POUS}
\end{figure}

To assess physical consistency we evaluate the full inverse-forward process. The inverse model returns a normalized source density
\(
  \hat f_\theta(\mathbf{x}) = \rho_\theta(\mathbf{x}\,|\,z)
\);
we input this prediction into the original steady advection–diffusion equation \eqref{eq:advec-diff}
using the true velocity field \(\mathbf v\) and P\'eclet number.
This is then solved using the same FEM discretization on the fine mesh as for the data generation, producing the reconstructed scalar field \(\tilde u\).
Consistency is quantified by the normalized MSE between the reconstructed solution and true solution using a normalized input source,
\[
  E_{consistency}\;=\;
  \frac{\bigl\|\,\tilde u-u_{\mathrm{true}}\bigr\|_{L^{2}(\Omega)}}
       {\bigl\|u_{\mathrm{true}}\bigr\|_{L^{2}(\Omega)}}.
\]
Note that for the CNWF model, this is distinct from the forward FEM solve using the learned nonlinear flux.
We compare $E_{consistency}$ for the the CNWF and baseline models in Figure \ref{fig:u_error} and give sample solutions in the circular domain in Figure \ref{fig:u_tilde_comp}. We observe that in addition to yielding more accurate source predictions, the CNWF model provides a source term which is more compatible with the original PDE. This is most pronounced in the circular domain experiment.

\begin{figure}[htbp]
    \centering
    \includegraphics[width=0.95\textwidth]{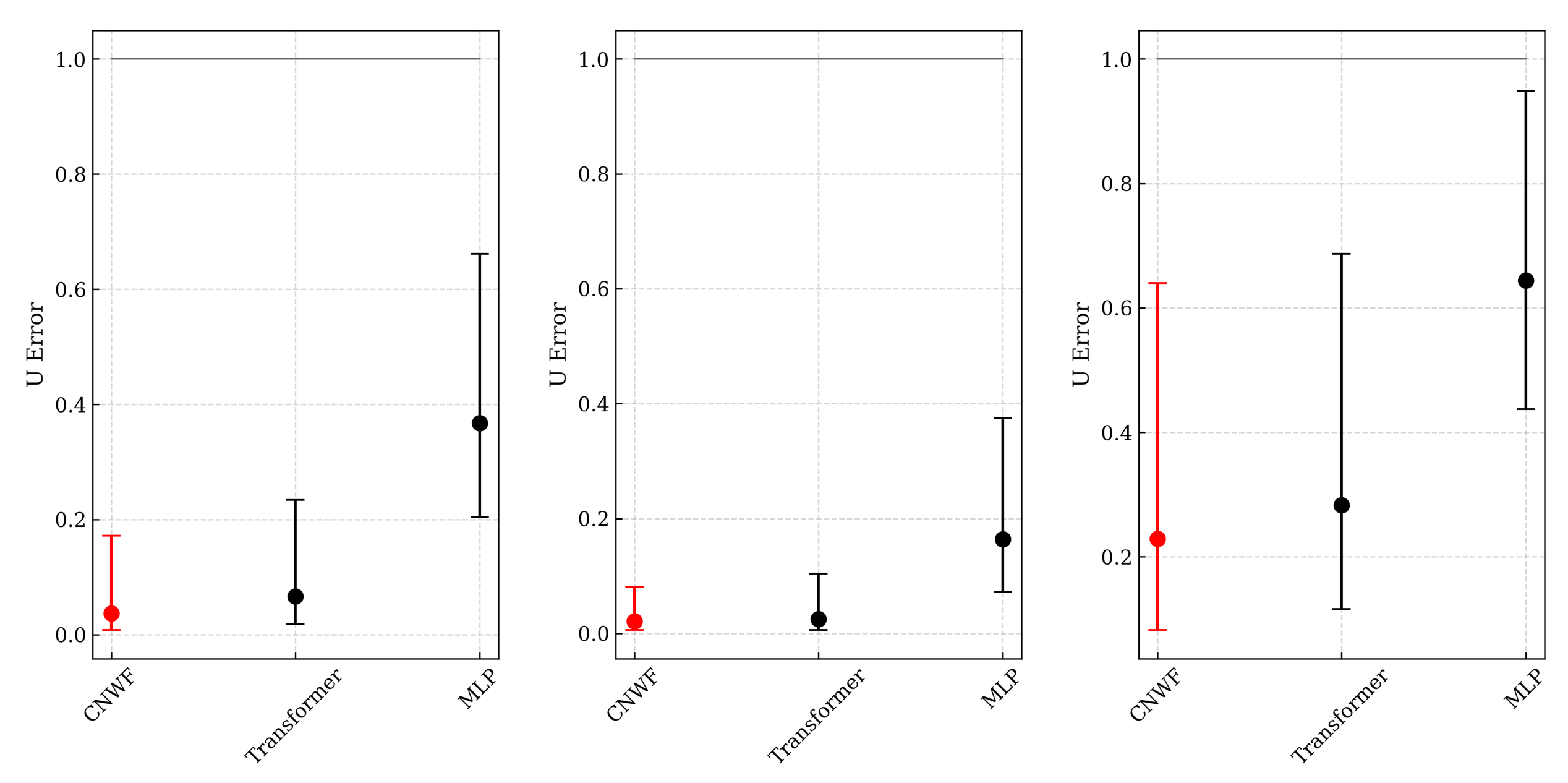}
    \caption{Evaluation of compatibility of the learned source term by forward evaluation in the original advection diffusion equation via FEM, for the CNWF and baseline models across each example case. }
    \label{fig:u_error}
\end{figure}

\begin{figure}[htbp]
    \centering
    \includegraphics[width=0.55\textwidth]{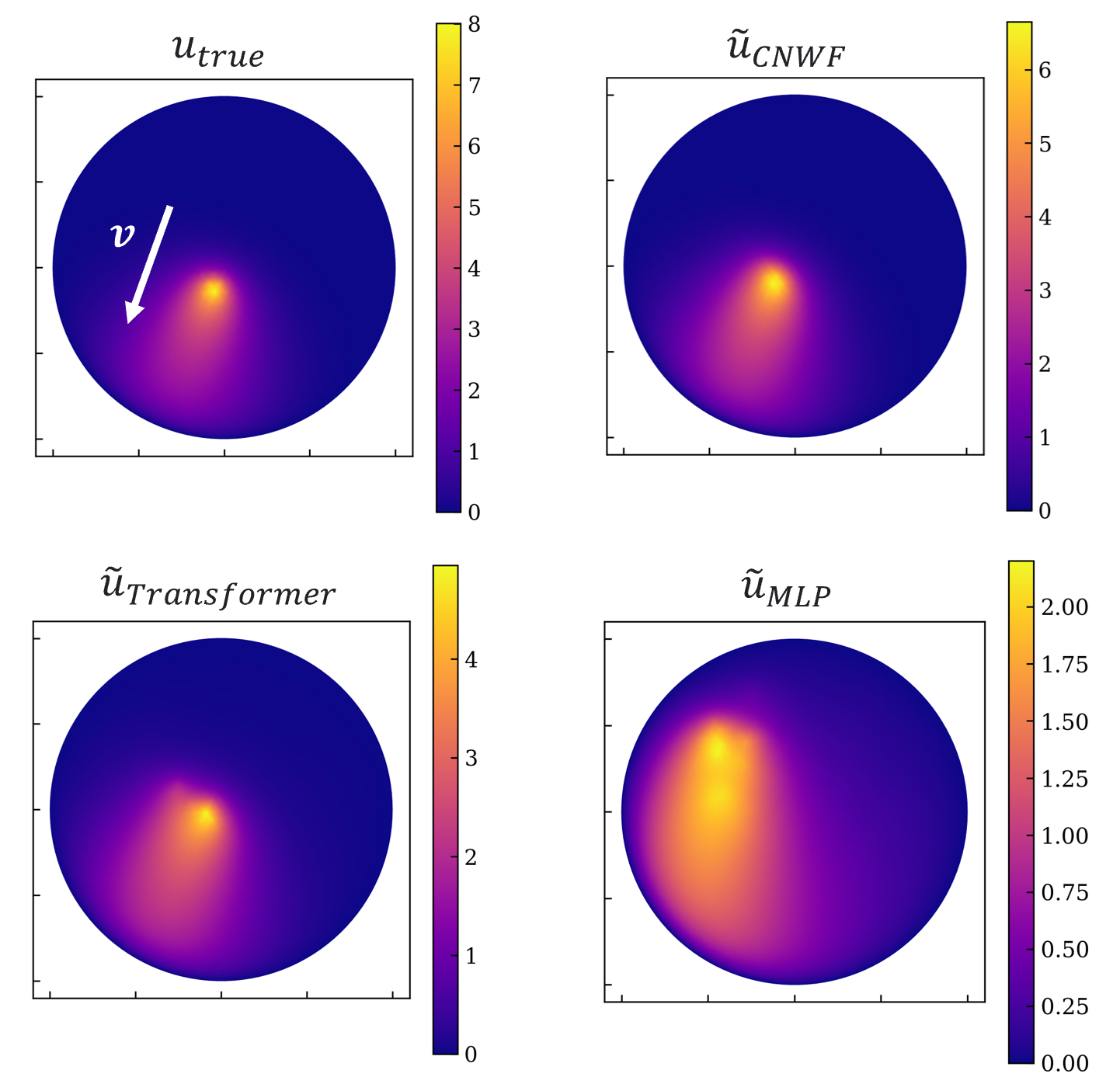}
    \caption{Comparison of forward solutions from predicted source terms for each model compared to the baseline solution. Note that the color scales are different across images.}
    \label{fig:u_tilde_comp}
\end{figure}

We vary the number of sensors \( N \) to evaluate how prediction accuracy scales with sensor count.
Despite being trained on a fixed number of sensors, the model exhibits strong generalization to varying sensor counts due to the permutation-invariant architecture of the transformer encoder. As shown in Fig~\ref{fig:sensor_convergence_comb}, the average prediction error for the inferred source \( f_h \) decreases monotonically with the number of available sensors. This reflects the model’s ability to effectively integrate information across an expanding set of observations, even without retraining. While the error does not converge to zero due to factors such as measurement noise, limited spatial coverage, and model representational capacity, the trend confirms that the architecture maintains stable performance under partial supervision and supports variable sensor deployment in practical applications.
We note the CNWF improves faster with additional sensors than the transformer baseline, and reaches a lower minimum.

\begin{figure}[htbp]
    \centering
    \includegraphics[width=0.6\textwidth]{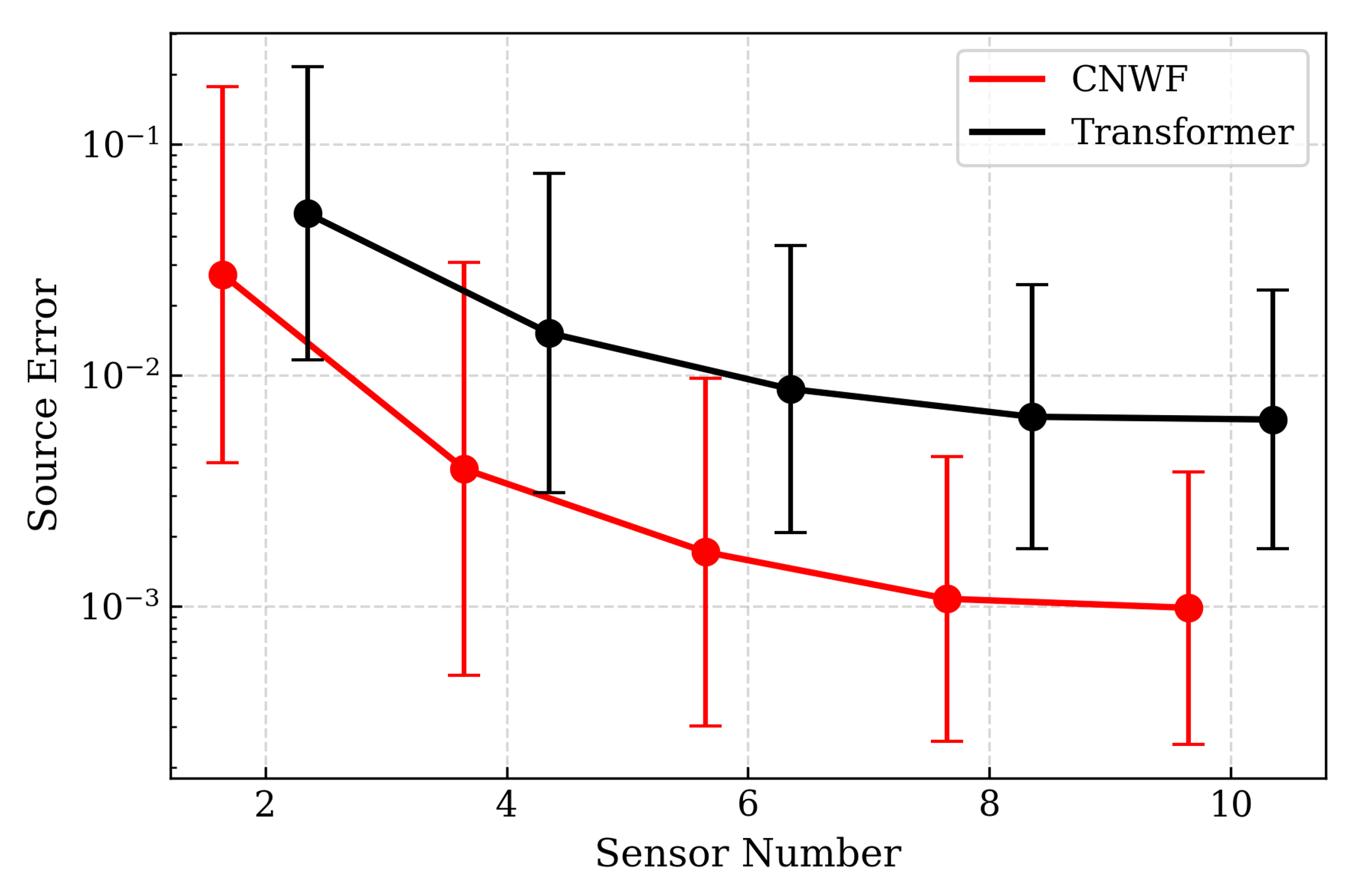}
    \caption{We compare the normalized source prediction MSE for the CNWF and transformer models over a variable static sensor number for the circular domain. Both models improve nearly monotonically with increased sensor number, but the CNWF demonstrates faster convergence, lower average and minimum error. }
    \label{fig:sensor_convergence_comb}
\end{figure}

In Theorem \ref{thm:3} we hypothesize that reducing the true coverage error results in more informative sensor configurations, and therefore improved error bounds on the source prediction task.
We additionally analyze model performance as a function of the average observed field magnitude at sensor locations:
\[
\bar{u}_\text{obs} = \frac{1}{N} \sum_{i=1}^N | u(\mathbf{x}_i) |.
\]
We interpret $\bar{u}_\text{obs}$ as an empirical proxy for the local observability of the source, with higher values indicating stronger coupling between sensors and the underlying field dynamics. In the presence of noise, this directly correlates to the signal-to-noise ratio (SNR), where higher sensor value indicates a lower SNR.

As shown in Fig.~\ref{fig:info_convergence}, decreased $\mathcal{J}_{true}$ and increased $u_{obs}$ both result in reduced source prediction error, suggesting that access to higher quality measurements improves inference. 
The observed dependence of accuracy on \( \mathcal{J}_{true} \) motivates the use of the adaptive coverage objective described in section \ref{sec:methodology_two_step}, under the condition in \ref{thm:3}.

\begin{figure}[htbp]
    \centering
    \includegraphics[width=0.75\textwidth]{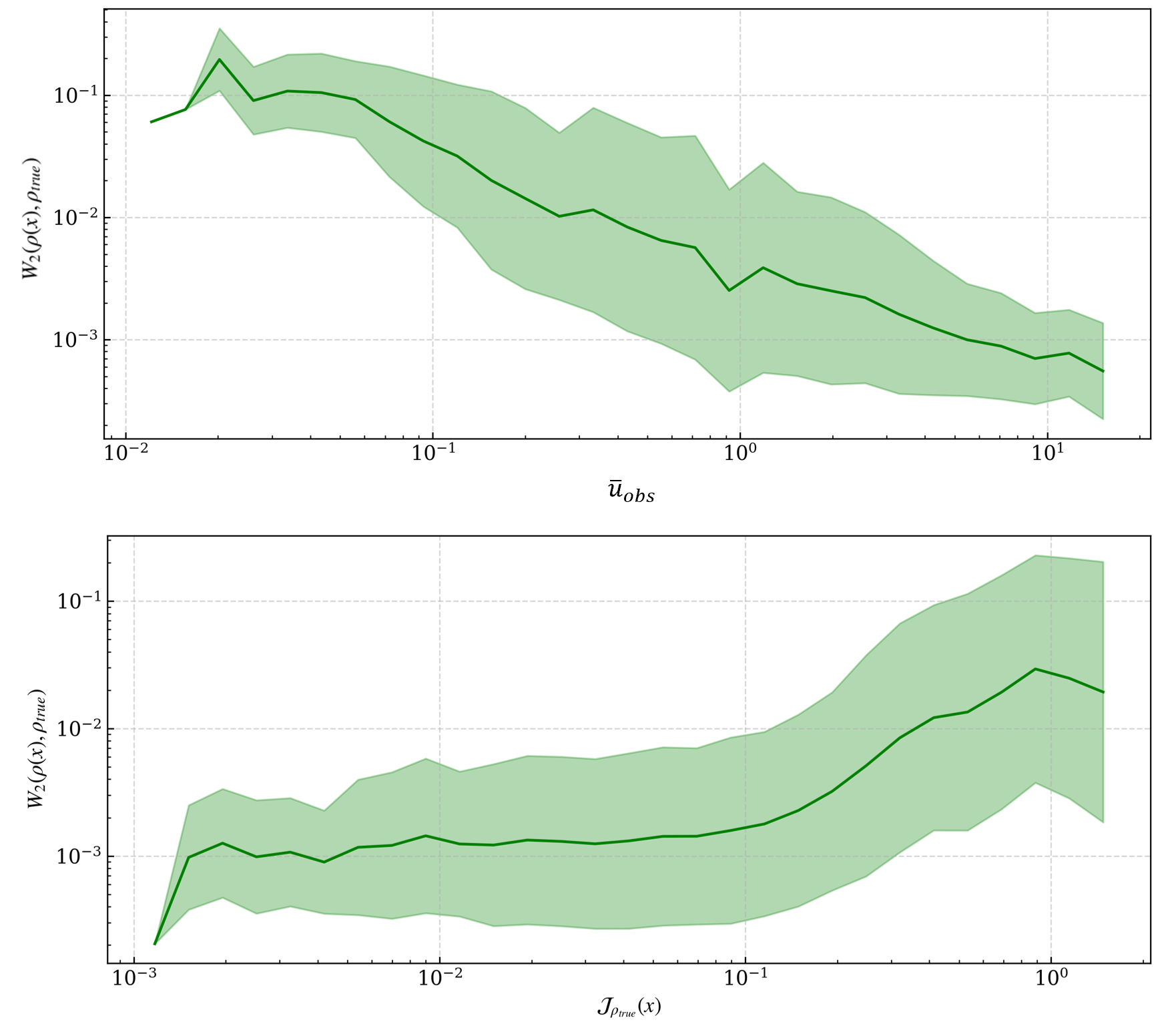}
    \caption{We demonstrate the CNWF model convergence for field prediction and source prediction over sensor measurement quality for all experiments. We use both the total observed concentration $\tilde{u}_{obs}$ (top) and the true coverage energy $\mathcal{J}_{true}$ (bottom) as measures of the sensor informativeness. Increasing the average sensor reading or decreasing the coverage energy both improve source prediction accuracy.}
    \label{fig:info_convergence}
\end{figure}

\subsection{Nonlinear Advection Diffusion}
To demonstrate the generality of the CNWF to a general class of transport processes we also consider a nonlinear advection diffusion system, with
\begin{equation}
    \mathbf{v}(u):=
    \begin{cases}
        0 & u(x)<T,
        \\ \mathbf{v}_0 \alpha u(x) & u(x) > T.
    \end{cases}
\end{equation}
and demonstrate similar, improved performance in this setting, as shown in table~\ref{tab:nonlinear_tabl}.

\begin{table}[t]
    \centering
    \caption{Test performance for nonlinear advection on circular geometry by Wasserstein metric.}
    \label{tab:nonlinear_tabl}
    \begin{tabular}{lc}
    \toprule
    \textbf{Model} & $W_2(\rho_\theta^0, \rho_{true})$ $\downarrow$ \\
    \midrule
    CNWF (ours) & \textbf{1.16e-3}\\
    Transformer baseline & 7.37e-3\\
    MLP baseline & 1.59e-2\\
    \bottomrule
    \end{tabular}
\end{table}

\subsection{Adaptive sensor placement}
\label{sec:results_sp}

We evaluate the ability of the proposed method to improve source localization by iteratively refining sensor positions using the geodesic Lloyd algorithm. Starting from a fixed initial configuration, sensors are repositioned over $K$ outer iterations, using the model-predicted source density $\rho_\theta(\cdot \mid z_k)$ as an importance function. We track both the model-relative objective $J_{\rho_k}(X)$ and the true localization error $J_{\mathrm{true}}(X)$ across iterations. Sample trajectories for a single randomly selected case on each geometry is shown in Figure \ref{fig:lloyd2}

\begin{figure}[htbp]
    \centering
    \includegraphics[width=0.85\textwidth]{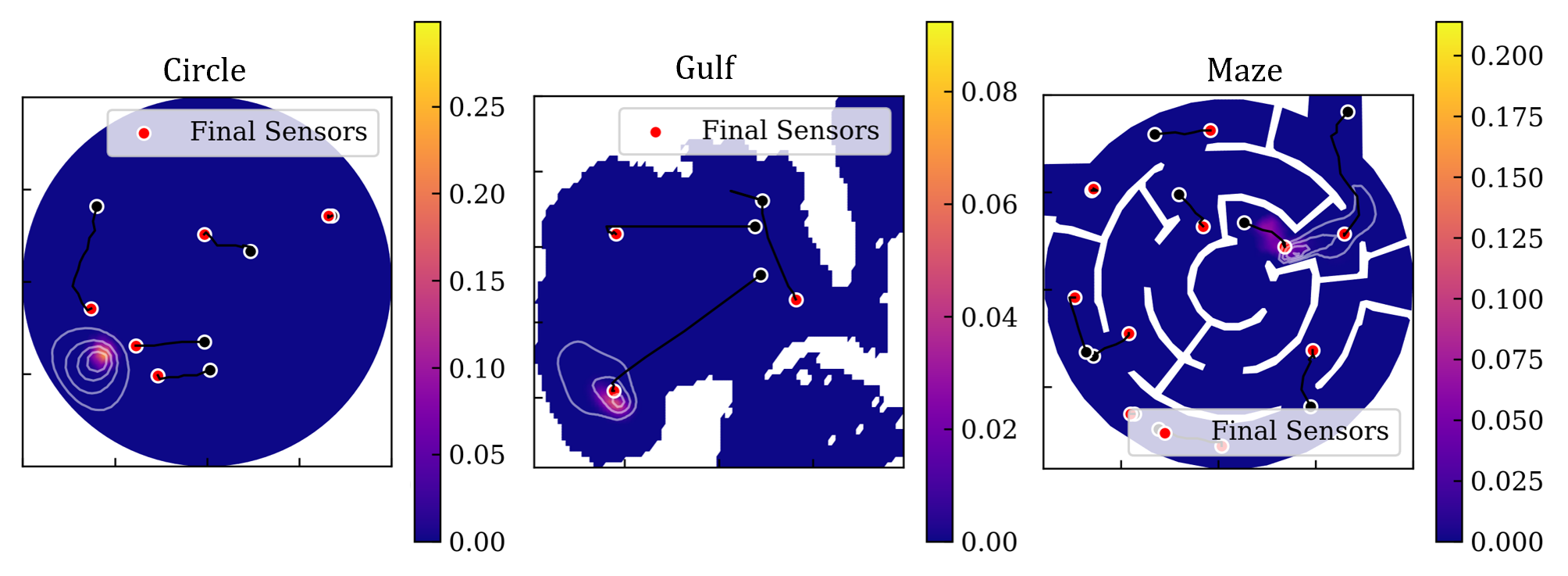}
    \caption{Example iterative CNWF-Lloyds (alg. \ref{alg:outer}) runs for each geometry. Initial sensor locations are shown in black, and final locations in red. The source is shown in color and the resulting scalar field is indicated with a white contour. These examples were randomly selected from cases with poor initial configurations.}
    \label{fig:lloyd2}
\end{figure}

In Figure \ref{fig:results_overview} we compare the average prediction errors (by Wasserstein metric) for random initializations and after running the two step adaptive sampling algorithm described in Algorithm \ref{alg:outer}. In all cases adaptive sampling improves the final source identification. The CNWF model outperforms the baselines both before and after sensor positioning, and demonstrates the largest proportional improvement from adaptive sampling.
We note that the changes due to sensor positioning are smaller than the differences between model class, indicating that while good sensor placement is clearly necessary, model capability is a larger limiting factor. This underscores the need for a principled training procedure such as in the CNWF implementation.

\begin{figure}[htbp]
    \centering
    \includegraphics[width=0.95\textwidth]{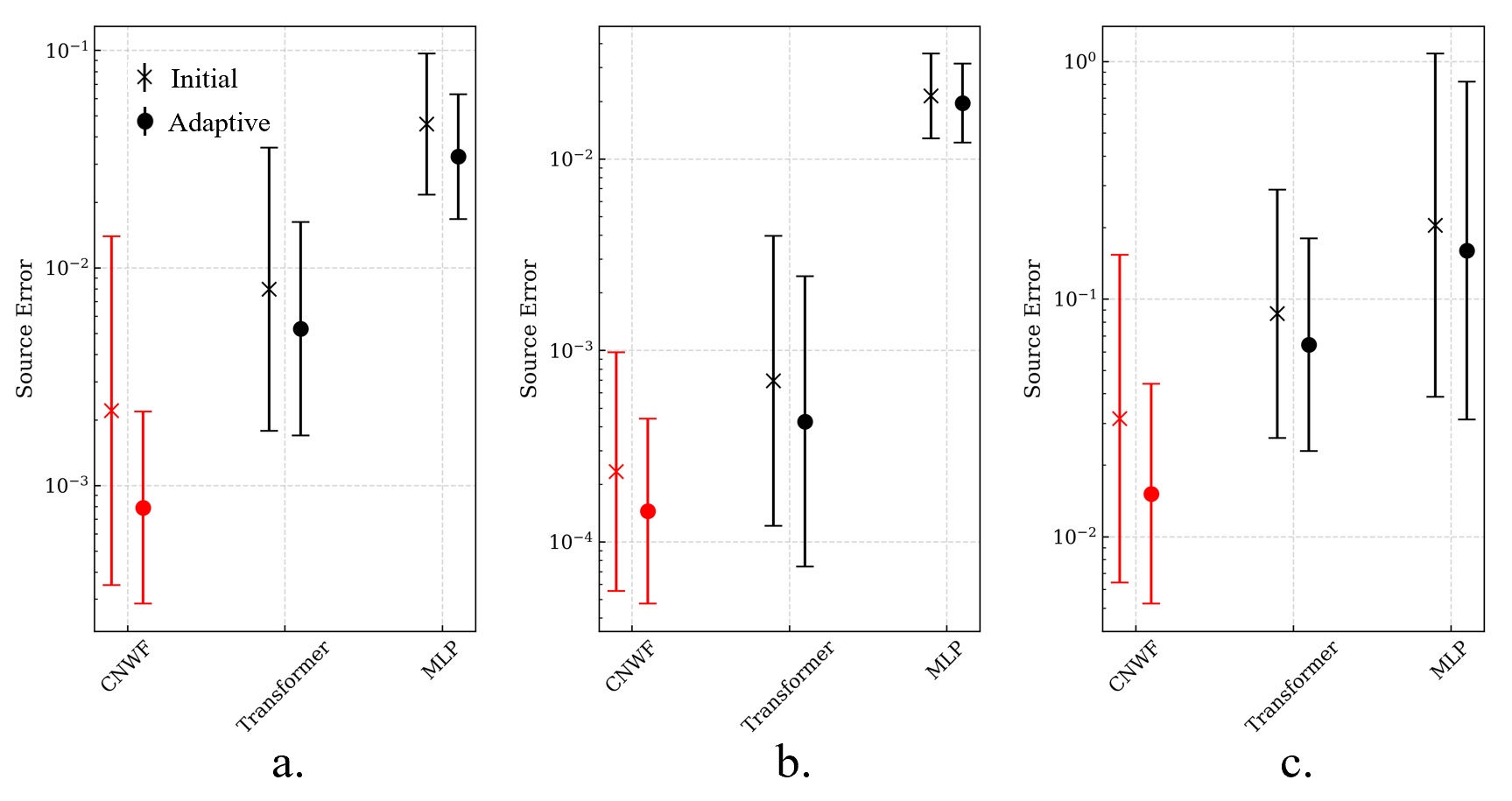}
    \caption{Comparison of the source prediction accuracy according to the Wasserstein metric, $W_{\epsilon,2}(\rho_\theta,\rho_{true})$, over the three experiments and three model architectures for each. We compare the prediction error before and after running the Lloyd's sensor placement algorithm, with the original values indicated with an ``$\text{x}$'' and the updated value using adaptive sampling indicated with an ``$\text{o}$''. The CNWF model outperforms the baselines both before and after sensor positioning, and demonstrates the largest proportional improvement from adaptive sampling. In all cases the use of the sensor positioning algorithm results in improved source prediction for all considered architectures, with the largest proportional improvement for the CNWF model.}
    \label{fig:results_overview}
\end{figure}

Figure~\ref{fig:lloyd_j} reports normalized values of both $\mathcal{J}_{\rho_k}$ and $\mathcal{J}_{\mathrm{true}}$ as a function of iteration $k$ for the circular domain over 20 trials. Model updates to $\rho_k$ are applied on iterations 5, 11, and 17.
We observe near-monotonic improvement in all test cases, suggesting that the CNWF-predicted source field remains stable and informative throughout the adaptive loop. The change in coverage objective is shown in Figure~\ref{fig:lloyd_delta_j}.

\begin{figure}[htbp]
    \centering
    \includegraphics[width=0.75\textwidth]{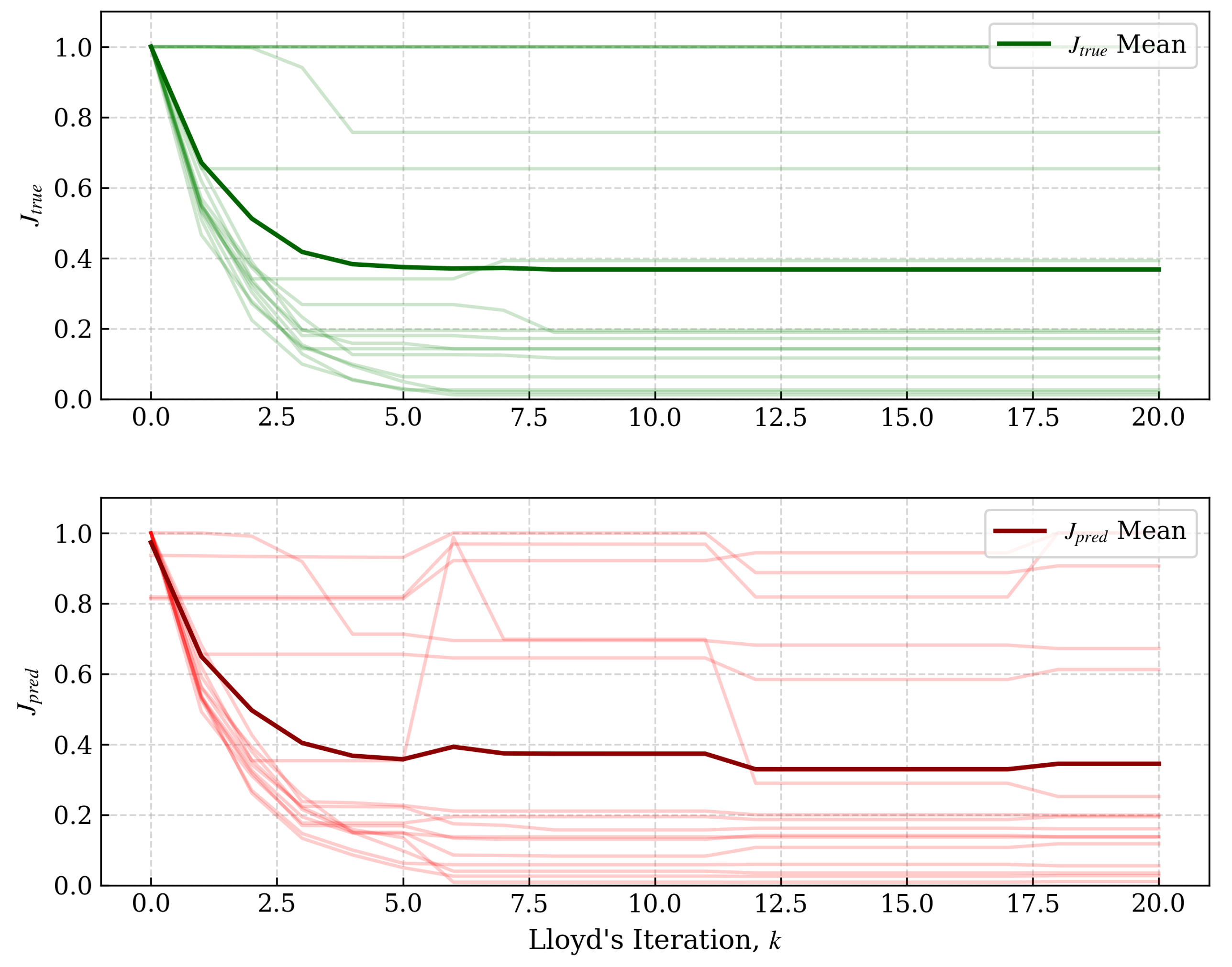}
    \caption{Normalized true and model relative coverage costs over Lloyd iterations for the circular domain, with importance updates at iteration $k\in[5,11,17]$. The average over the 20 trials is show in bold.}
    \label{fig:lloyd_j}
\end{figure}

\begin{figure}[htbp]
    \centering
    \includegraphics[width=0.75\textwidth]{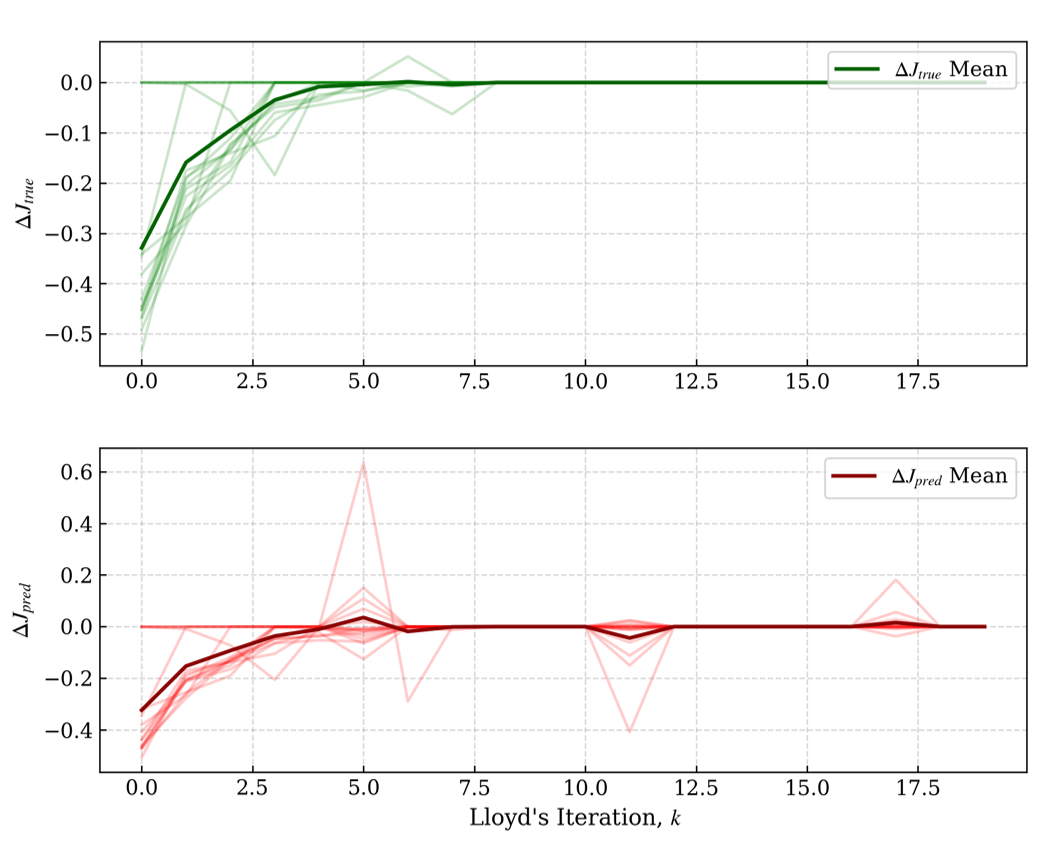}
    \caption{Change in coverage costs from Figure \ref{fig:lloyd_j}.}
    \label{fig:lloyd_delta_j}
\end{figure}

This closed-loop behavior confirms the benefit of integrating structure-preserving inference with sensor guidance: learned source predictions not only match the observed field but also serve as actionable importance maps for adaptive coverage. As shown in Table~\ref{tab:vertical_results}, this process improves final source prediction accuracy across all geometries. The improvement is most pronounced in the circular domain, where velocity variability is high and field observations alone are less informative without adaptive refinement.
The key metrics are summarized in table \ref{tab:vertical_results}.

\begin{table}[t]
    \centering
    \caption{Test performance on each geometry.  
    Metrics: KL divergence (KL), root-mean-square error on the source (F-RMSE), and RMSE of the scalar field obtained by solving the forward PDE with the predicted source (U-RMSE).  $\downarrow$ indicates lower is better.}
    \label{tab:vertical_results}
    \begin{tabular}{llccc}
    \toprule
    \textbf{Geometry} & \textbf{Model}
    & $W_2(\rho_\theta^0, \rho_{true})$ $\downarrow$ & $W_2(\rho_\theta^K, \rho_{true})$ $\downarrow$ & $\frac{\bigl\|\,\tilde u-u_{\mathrm{true}}\bigr\|_{L^{2}(\Omega)}}
           {\bigl\|u_{\mathrm{true}}\bigr\|_{L^{2}(\Omega)}}$ $\downarrow$ \\
    \midrule
    \multirow{3}{*}{Circle}%
        & CNWF (ours) & \textbf{2.20e-3} & \textbf{7.90e-4} & \textbf{3.77e-2} \\
        & Transformer baseline & 7.98e-3 & 5.25e-3 & 6.70e-2 \\
        & MLP baseline & 4.58e-2 & 3.25e-2 & 3.68e-1 \\[2pt]
        \midrule
    \multirow{3}{*}{Gulf}%
        & CNWF (ours) & \textbf{2.32e-4} & \textbf{1.44e-4} & \textbf{2.14e-2} \\
        & Transformer baseline & 6.93e-4 & 4.25e-4 & 2.45e-2 \\
        & MLP baseline & 2.13e-2 & 1.96e-2 & 1.64e-1 \\[2pt]
        \midrule
    \multirow{3}{*}{Maze}%
        & CNWF (ours) & \textbf{3.13e-2} & \textbf{1.51e-2} & \textbf{2.29e-1} \\
        & Transformer baseline & 8.66e-2 & 6.42e-2 & 2.83e-1 \\
        & MLP baseline & 2.04e-1 & 1.60e-1 & 6.44e-1 \\[2pt]
        \midrule
    \bottomrule
\end{tabular}
\end{table}

\section{Discussion and conclusions}
\label{sec:conclusions}

We presented a structure-preserving framework for source identification in advection–diffusion systems, coupling finite element exterior calculus with neural network surrogates to recover spatially distributed source terms from sparse sensor measurements. The proposed CNWF model integrates learned source and flux terms within a PDE-constrained optimization framework that maintains compatibility with the underlying conservation law. The data-adaptive reduced basis, provides efficient inference while retaining physical consistency.

Our analysis established conditions under which the adaptive sensor placement loop, based on geodesic Lloyd’s algorithm, converges in both model-relative and true coverage objectives. In particular, we proved that if the change in the predicted importance function is bounded relative to the descent achieved in the Lloyd update, then the overall sensing configuration strictly improves. This feedback loop was shown to reduce prediction error bounds under mild regularity assumptions, providing theoretical support for the closed-loop design.

Empirical results across three domains demonstrate that CNWF achieves improved source prediction and consistency with the true PDE compared to baseline MLP and transformer models. Notably, the CNWF model exhibits strong generalization to unseen sensor configurations, improved localization through adaptive sampling, and smooth, interpretable source fields. These benefits result from structure preservation, reduced basis representation, and physically motivated training constraints. Together, these results validate CNWF as a principled and effective tool for inverse modeling and adaptive sensing in transport systems.

\section{ACKNOWLEDGMENTS}
B. Shaffer's work is supported by the National Science Foundation Graduate Research Fellowship under Grant No. DGE-2236662. M. A. Hsieh's work is supported by ONR Award \#N000142512171. N. Trask's work is supported by SEA-CROGS (Scalable, Efficient and Accelerated Causal Reasoning Operators, Graphs and Spikes for Earth and Embedded Systems), a Mathematical Multifaceted Integrated Capability Center (MMICCs) funded by the Department of Energy Office of Science. J. Klobusicky's work is supported by the National
Science Foundation under Grant No.\ 231628. The authors thank Quercus Hernandez for assistance designing Figure 1.


\bibliographystyle{plain}
\bibliography{bib}

\section*{Appendix A: Neural Network Parameterization Details}
\label{app:architecture}

We parameterize the source, flux, and basis terms in eq. \ref{eq:discrete_conservation} using neural networks conditioned on sparse sensor measurements.
The CNWF approach provides a structure-preserving framework that admits flexible neural parameterizations. The key properties that must be preserved are the POU property of the basis functions, and the non-negativity of the learned source. The anti-symmetry of the fluxes is automatically achieved through the coboundary operator.

We implement the parameterizations as task specific head networks around a universal transformer encoder, which contains $\sim$90\% of the parameters of the composite CNWF model. This maps from the conditioning inputs to a latent representation and allows for a shared representation of the underlying physical problem setting across the partition, flux, and source networks.. Let \( z \in \mathbb{R}^{d_z} \) denote the vector of sensor inputs, which is encoded by the permutation-invariant transformer encoder
\(
    \mathcal{E}_\theta : \mathbb{R}^{d_z} \to \mathbb{R}^{d_{\hat{z}}},
\)
yielding a latent representation \( \hat{z} = \mathcal{E}_\theta(z) \in \mathbb{R}^{d_{\hat{z}}} \).

The reduced-basis source term is predicted by a network
\(
    \mathcal{S}_\theta : \mathbb{R}^{d_{\hat{z}}} \to \mathbb{R}^{n_c^0},
\)
so that the learned source coefficients are given by \( \hat{f}_\theta(z) := \mathcal{S}_\theta(\hat{z}) \geq 0\). We enforce non-negative outputs through a ReLU activation on the final layer.
The flux correction in the reduced 1-form space is predicted by a network
\(
    \mathcal{N}_\theta : \mathbb{R}^{n_c^0} \times \mathbb{R}^{d_{\hat{z}}} \to \mathbb{R}^{n_c^1},
\)
with output defined as \( \mathcal{N}_\theta(\hat{u}, z) := \mathcal{N}_\theta(\hat{u}, \hat{z}) \), where \( \hat{u} \) are the reduced scalar field coefficients.
Finally, the basis transformation is defined by a network
\(
    \mathcal{P}_\theta : \mathbb{R}^{d_{\hat{z}}} \to \mathbb{R}^{n_c^0 \times n_f^0},
\)
producing a convex combination matrix via row-wise softmax:
\(
    W_\theta(z) := \text{softmax}_{\text{row}} \left( \mathcal{P}_\theta(\hat{z}) \right).
\)

In this work, we use a self-attention transformer to encode the sensor data, an MLP for the source term, a modified residual network for the fluxes, and cross attention with the spatial node coordinates for the partition network, for more detail see Appendix \ref{app:model_params}. The flux networks are implemented in a residual architecture comprising a linear skip connection and a nonlinear MLP,
\[
    \mathcal{N}_\theta(\hat{u}, z) = \mathcal{L}(\hat{u}, z) + \alpha\mathcal{N}_{nonlinear}(\hat{u}, z),
\]
with a tunable gain parameter \( \alpha > 0\) that controls the contribution of the nonlinear component and balances expressivity with numerical stability. A schematic representation of the neural network parameterization is shown in Figure \ref{fig:method_overview}.

We note that this approach is amendable to supervised pretraining via the transformer backbone, which can then be fine tuned through the head networks into a structure preserving model. This approach greatly reduces the number of training iterations that require a full nonlinear solve but may introduce undesirable local minima when transitioning to the structure preserving phase. We did not pretrain in this work.

All networks are implemented in pyTorch \cite{DBLP:journals/corr/abs-1912-01703} and optimized using Adam \cite{kingma2014adam}. All terms are differentiable and trainable via auto-differentiation. The CNWF model for each test case was trained on a Nvidia H200, for $\sim$3 hours wall time per example.

\subsubsection*{Transformer Encoder}

A full description of the partition transformer architecture is provided in \cite{kinch2025structure}, and closely follows a standard cross-attention encoder. For completeness we describe our sensor encoding transformer backbone implementation here, which is built on a standard multihead self attention architecture.
Let \( z = \{ z_i \}_{i=1}^N \) be a set of sensor observations, where each \( z_i = (x_i, u_i, \mathbf{v}_i) \in \mathbb{R}^d \) encodes spatial location, scalar field value, and local velocity. We project each \( z_i \) into a token \( h_i^{(0)} \in \mathbb{R}^{d_{\text{token}}} \) via a learned linear embedding layer.

The encoder \( \mathcal{E}_\phi \) is composed of \( L \) stacked transformer blocks. Each block applies multi-head self-attention followed by an MLP with residual connections:
\[
\begin{aligned}
    H^{(\ell+1)} &= \text{LayerNorm} \left( H^{(\ell)} + \text{MultiHeadSelfAttention}(H^{(\ell)}) \right), \\
    H^{(\ell+1)} &= \text{LayerNorm} \left( H^{(\ell+1)} + \text{MLP}(H^{(\ell+1)}) \right),
\end{aligned}
\]
where \( H^{(\ell)} = \{ h_i^{(\ell)} \}_{i=1}^N \), and attention is computed using:
\[
    \text{Attention}(Q, K, V) = \text{softmax} \left( \frac{QK^\top}{\sqrt{d_{\text{head}}}} \right) V.
\]

After the final attention layer, we obtain updated tokens \( \{ h_i^{(L)} \} \). These are pooled into a global latent \( \hat{z} \in \mathbb{R}^{d_z} \) using attention pooling:
\[
    \hat{z} = \sum_{i=1}^N \alpha_i h_i^{(L)}, \quad \alpha_i = \frac{\exp(w^\top h_i^{(L)})}{\sum_j \exp(w^\top h_j^{(L)})},
\]
with \( w \in \mathbb{R}^{d_{\text{token}}} \) learned during training.

This encoder is permutation-invariant by design and supports variable-size input sets.

\section*{Appendix B: Data setup}
All data were generated by solving the FEM discretization of the advection diffusion equations using the scikit-fem \cite{gustafsson2020scikit} python package. Meshes were generated using gmsh \cite{geuzaine2009gmsh}, or nanomesh \cite{Renaud_nanomesh}. Each mesh has $~O(1e3)$ nodes, although we note the reduced basis allows efficient generalization to finer discretizations. Velocitiy profiles for the Gulf were retrieved from NOAA \cite{hogan2019hycom}, and for the maze were generated by solving stokes flow in scikit-fem with inlet and outlet boundary conditions. We use homogeneous Dirichlet boundaries throughout, with Neumann conditions imposed on the outlet of the maze and gulf. We parameterize the source term with a bump function with radius $r=0.07$. We chose P\'eclet $~1e3$ for all examples to give qualitatively interesting results while maintaining stability for the chosen discretization and solver. The discretized domains, and sample velocities, forcings, and resulting scalar fields are shown for each experiment in Figure \ref{fig:data_overview}.

\begin{figure}[htbp]
    \centering
    \includegraphics[width=0.95\textwidth]{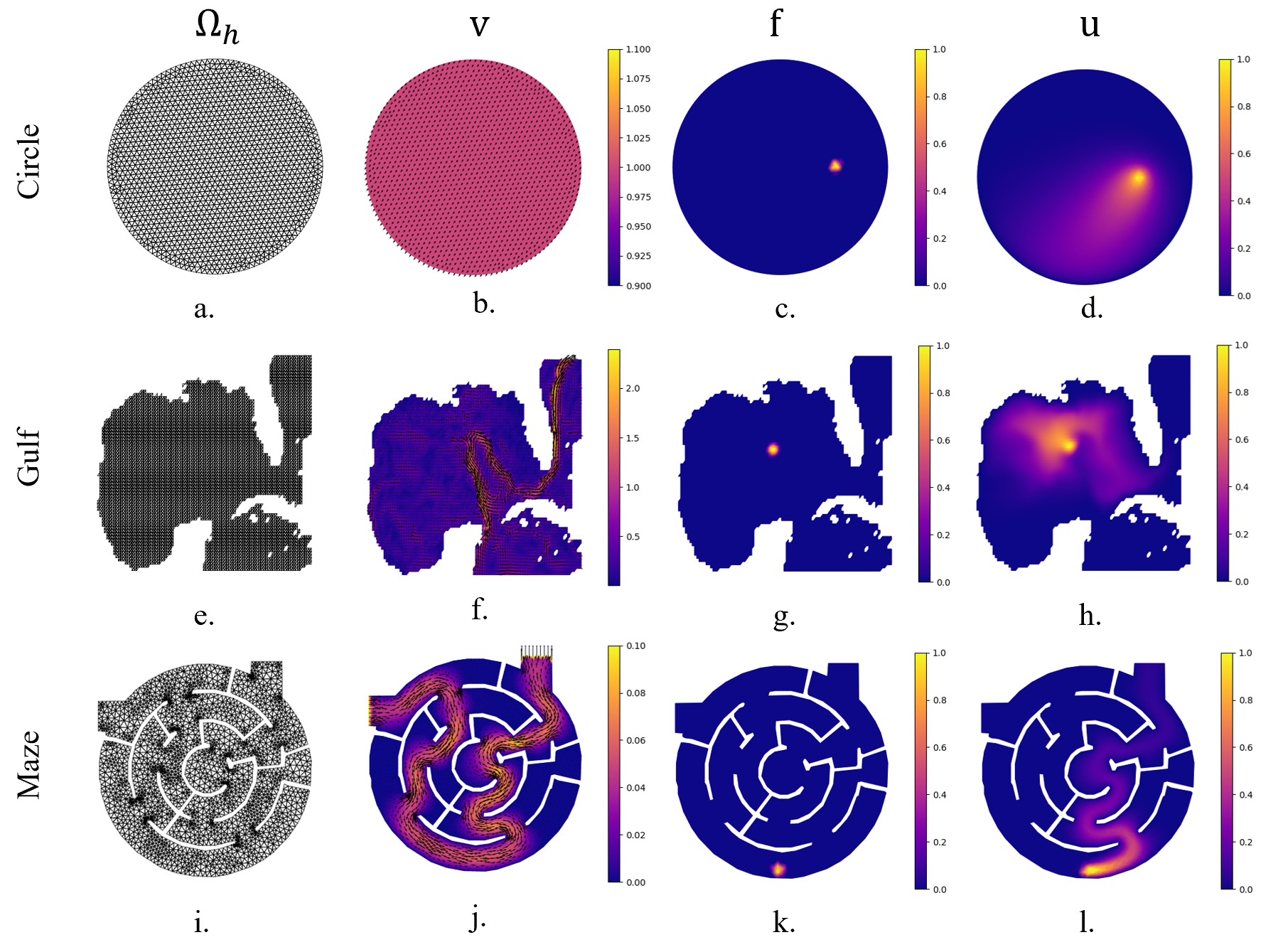}
    \caption{Sample source terms and resulting solutions for advection diffusion in the three cases considered.}
    \label{fig:data_overview}
\end{figure}

\section*{Appendix C: Model and Training Parameters}
\label{app:model_params}

All models use the same general architecture and training setup unless otherwise noted. Each model was implemented in PyTorch and trained in double precision. ReLU activations were used throughout both the transformer encoder and MLP head networks. A constant dropout rate of 0 was used across all layers. All optimization was performed using Adam with default PyTorch settings.

The table below summarizes the model architecture and training hyperparameters used across experiments.

\begin{table}[ht!]
    \centering
    \caption{Model architecture and training parameters used across all experiments.}
    \begin{tabular}{ll}
        \toprule
        \textbf{Parameter} & \textbf{Value} \\
        \midrule
        Number of trainable POUs ($n_{\text{POUs}}$) & 5 \\ 
        Number of transformer heads ($n_{\text{heads}}$) & 4 \\
        Token embedding dimension ($d_{\text{token}}$) & 32 \\
        Latent dimension ($d_{z'}$) & 256 \\
        Transformer MLP width & 128 \\
        MLP head width & 128 \\
        MLP head depth & 6 \\
        Activation function & ReLU \\
        Solver: max iterations & 30 \\
        Solver: tolerance & $10^{-8}$ \\
        Precision & Double \\
        \midrule
        Learning rate & $1 \times 10^{-3}$ \\
        Train epochs & $2 \times 10^{5}$ \\
        Batch size & $64$ \\
        Data cache size & 1600 samples \\
        Data cache reset tolerance & $25 \times$ \\
        Validation set size & 20\% of training cache \\
        \bottomrule
    \end{tabular}
    \label{tab:model-params}
\end{table}

\section{Appendix D: Proof for 
convergence to sensor location}
\label{app:conv_proof}

We provide a proof for Theorem~\ref{thm:conv}

\begin{proof}

\begin{figure}
    \centering
	\includegraphics[width = .3\textwidth]{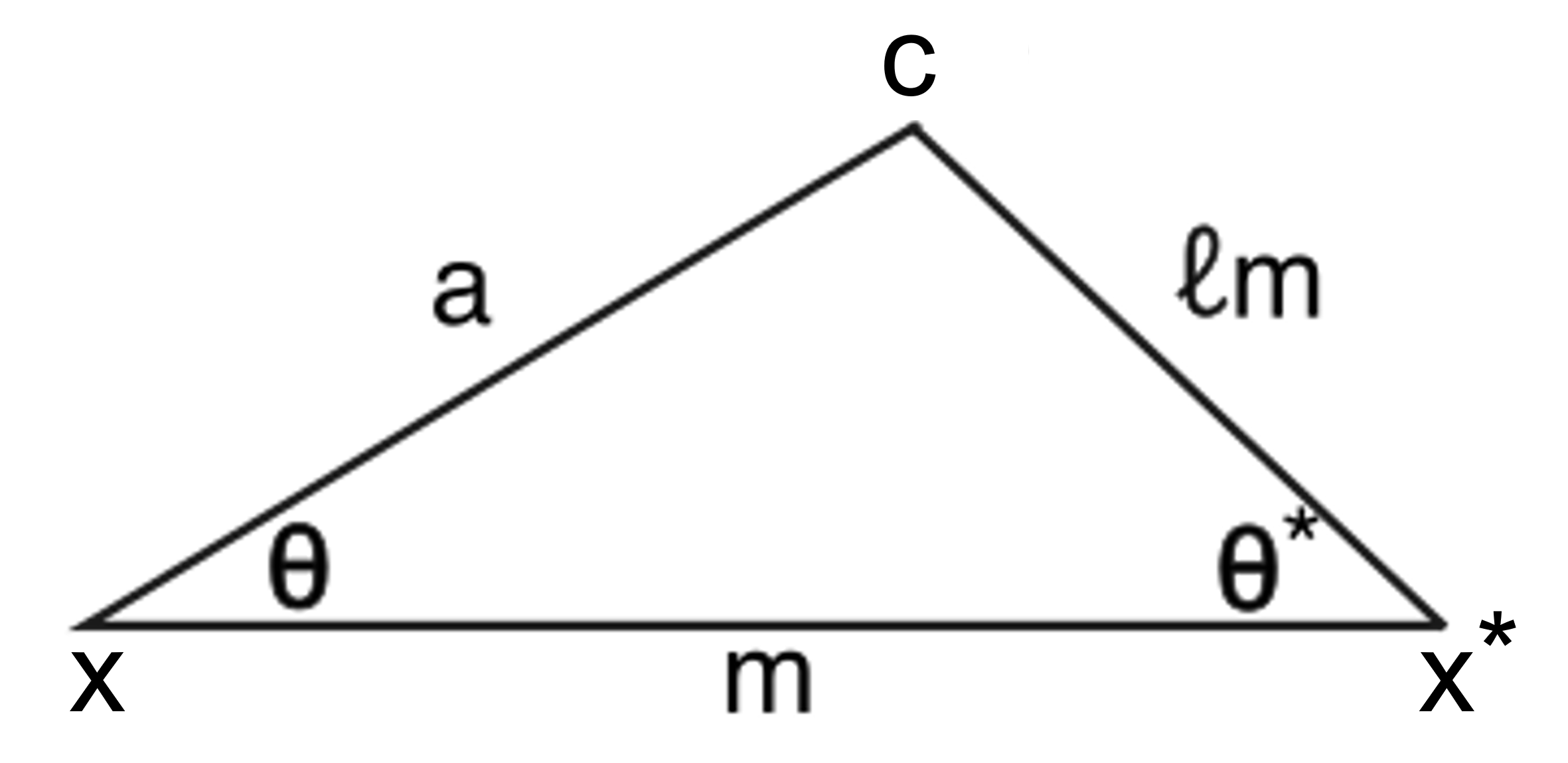}
	\caption{Figure for Theorem \ref{thm:conv}.} \label{figlloyd}
\end{figure}

In (\ref{lipbound}), if we let $X_1 = X$ be an initial configuration of sensors with a minimal distance $m_0< m_{\mathrm{c}}$, and  $X_2$ be a configuration with a sensor located at $x^*$, then the inequality becomes
\begin{equation}
    \|c^*(X)  - x^*\| \le rm_0. \label{lipperbound}
\end{equation}
In Fig. \ref{figlloyd} we use a coordinate system that places $x$ at the origin and $x^*$ on the positive $x$-axis.  We can also take the centroid $c^*$ to be on the upper half plane, so that that the angle $\angle (x,x^*,c^*) = \theta^* \in [0, \pi]$. 

Given values of $\theta^*$ and $\ell$, we will calculate the side length $a = \|c^* - x\|$ and subsequently the rate of change of the minimal distance $m = \|x - x^*\|$. For $\theta = \angle(x^*, x, c^*)$, the law of sines gives
\begin{equation}
	\frac{m}{\sin(\pi- \theta^*- \theta)}  = \frac{\ell m}{\sin(\theta)} = \frac{a}{\sin(\theta^*)} . \label{lawsines}
\end{equation}
From the first equality in (\ref{lawsines}),
\begin{align}
	\sin(\theta) = \ell \sin(\pi- \theta^*- \theta)  = \ell \sin (\theta^*+ \theta) = \ell (\sin(\theta) \cos(\theta^*) + \sin(\theta^*) \cos(\theta)),
\end{align}
and so we relate $\theta$ and $\theta^*$ by 
\begin{equation}
	\cot(\theta) = \frac{1- \ell\cos(\theta^*)}{\ell\sin(\theta^*)}.
\end{equation}
From the second equality in (\ref{lawsines}), 
\begin{equation}
	a = \frac{\ell \sin(\theta^*)}{\sin(\theta)} m. \label{aform}
\end{equation}

Recalling that $m(t) = \min_{k} \| x_k(t) - x^*\|$, we note that the minimum of differentiable functions is absolutely continuous and has a derivative almost everywhere.  Where the derivative exists, we write $x(t) = (x_1(t), x_2(t))$, with $x(t') = x$ and use the Lloyd dynamics (\ref{eq:cont_lloyd_update}) with (\ref{aform}) to find
\begin{align}
    \dot m(t') = \frac{d}{dt} \|x(t) - x^*\|\Bigr|_{t = t'} &= \frac{-m(t')\dot x_1(t')+ x_2(t') \dot x_2(t')}{m(t')} \\ &= -ka\cos(\theta) =  -k(1 - \ell\cos(\theta^*))m \le k(r-1)m,
\end{align}
where the maximum value was found with $\ell = r$ and $\theta^* = 0$.  For $t \ge 0$, 
\begin{equation}
    \dot m(t)  \le \alpha(r-1) m \label{cron1}
\end{equation}
almost everywhere. Since $m(t)$ is absolutely continuous, this inequality can then be integrated to obtain
\begin{equation}
m(t) \le e^{\alpha(r-1)t }m_0 \label{cron2}, 
\end{equation}
and so $m(t) \rightarrow 0$ as  $t \rightarrow \infty$.
\end{proof}

We now provide a general class of functions which satisfy (\ref{lipperbound}) for all configurations $X$, ensuring global convergence of sensors to $x^*$. 
This class of functions places a nonzero mass across the entire domain, to ensure the centroid formula (\ref{eq:centroid}) for cells not containing the event is well-defined and approximately solves the unweighted coverage problem.

The main task in constructing such functions is ensuring that there is a universal lower bound on the fraction of mass from $\rho_X$ contained in $V^*$.  We will use bump functions.  By this, we say that $\varphi$ is a bump function if it is a nonnegative smooth function with support contained in the unit disk centered at the origin and $\int_{\mathbb R^2} \varphi(x)dx = 1$.  It is also required to satisfy a \textit{sector condition}: there exists $\alpha>0$ such that for any sector $K_\theta$ of the unit circle at the origin with angle $\theta$, 
\begin{equation}
    \int_{K_\theta} \varphi dx \ge \alpha \theta.
\end{equation}
The sector condition holds for a large collection of functions.  For instance, $\varphi$ can be radially symmetric, or have a support which contains a disk of any size centered about the origin. 

We consider bump functions containing $x^*$ having support $D :=  D_{r'm}(x^*)$ (a disk of radius $r'm$ centered at $x^*$). Outside of the support, the function is a sufficiently small constant $\beta$ which makes the dynamics and defined, and reduces to unweighted Lloyd dynamics in cells outside of the bump function's support.  For a bump function $\varphi$ and $r' \in (0,1)$, the importance function takes the form of a translated and scaled copy of $\varphi$ with background noise:
\begin{equation} \label{support}
    \rho_P(x) =
    \begin{cases}
        \frac{1}{(r'm)^2}\varphi\left(\frac{q}{r'm}\right ) & q \in D, \\ 
        \beta &q \in \Omega \backslash D. 
    \end{cases}
\end{equation}
Here, $\beta$ is a small constant which depends only on $m$, $r'$, and the diameter of the domain $M$.

In what follows, we fill find a sufficiently small $\beta$ such that convergence holds under sensors following (\ref{eq:cont_lloyd_update}).  
This involves showing that any configuration of sensors with minimal distance $m$ is guaranteed to contain a uniform lower bound of the fraction of $D$ in the Voronoi cell $V^*$ which contains $x^*$. 
Let $x$ be the sensor for $V^*$. By convexity of Voronoi cells, the interior of the segment $\overline{xx^*}$ is contained in the interior of $V^*$ (this is true even if $x^*$ is in the boundary of $V^*$). In Fig. \ref{figfrac}, we use coordinates so that this segment lies on the $x_1$-axis, with $x$ placed at the origin and to the left of $x^*$.  On the segment perpendicular to $\overline{xx^*}$ intersecting $\partial D$ intersects $\overline{xx^*}$, denote  $q_1$ and $q_2$ as the closest points above and below the x-axis, respectively, which intersect the boundary of $V^*$. These points may either be on the boundary of $\Omega$ or bordering another Voronoi cell.  By convexity, the quadrilateral $\square (q_1,x,q_2,x^*) \subset V^*$. Define the angles $\theta = \angle (q_1, x^*, q_2)$ and $\theta^* = \angle (q_1, x, q_2)$.

\begin{figure}
    \centering
	\includegraphics[width = .4\textwidth]{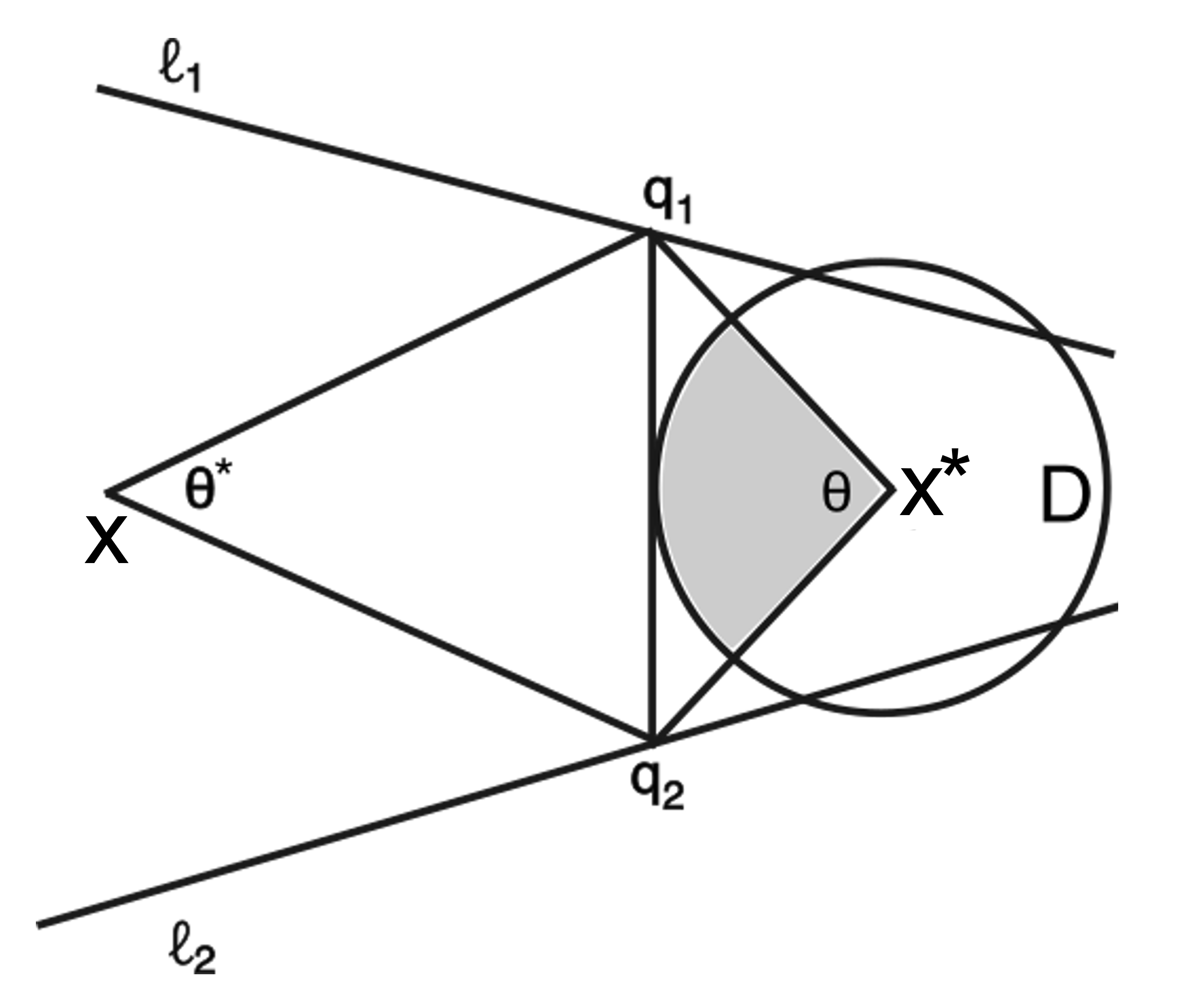}
	\caption{Figure for Lemmas \ref{invthm} and \ref{cormin}.}
    \label{figfrac}
\end{figure}

\begin{lemma} \label{invthm} The following bounds hold:

\vspace{5pt}

(i) $|D \cap V^*|\ge \frac{\theta}{2}(mr')^2, $

(ii) $|V^*| \le \frac{R^2}{2}(\theta^* +\theta)$. 
\end{lemma}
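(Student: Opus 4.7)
The two bounds are independent geometric estimates, and I would establish them separately using the convexity of $V^*$ together with the specific positions of $q_1, q_2$ on $\partial V^*$.

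For (i), my plan is to exhibit an explicit convex subset of $D \cap V^*$ whose area realizes the bound. The natural candidate is the circular sector $S$ centered at $x^*$, of radius $r'm$, bounded by the rays $x^*q_1$ and $x^*q_2$; by the standard sector area formula, $|S| = \frac{\theta}{2}(r'm)^2$. Containment $S \subseteq D$ is immediate since $S$ has the radius of $D$. For $S \subseteq V^*$, I would use that $V^*$ is convex and contains $x^*$, hence star-shaped at $x^*$, and argue that for each angle $\alpha$ in the angular interval at $x^*$ spanned by $q_1$ and $q_2$, the ray from $x^*$ at angle $\alpha$ remains in $V^*$ for distance at least $r'm$. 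This follows from the defining property of $q_1, q_2$ as the nearest intersections of the perpendicular to $\overline{xx^*}$ with $\partial V^*$: the arc of $\partial V^*$ in the corresponding angular range lies outside $D$, so no ray in this range exits $V^*$ before reaching $\partial D$.

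For (ii), I would split $V^*$ by the chord $q_1q_2$ into the convex subregions $V^*_x$ (containing $x$) and $V^*_{x^*}$ (containing $x^*$), so that $|V^*| = |V^*_x| + |V^*_{x^*}|$. The plan is to bound each by a circular sector of radius $R$: $V^*_x$ in the sector at $x$ of angle $\theta^*$ subtended by the rays $xq_1$ and $xq_2$, and $V^*_{x^*}$ in the sector at $x^*$ of angle $\theta$ subtended by the rays $x^*q_1$ and $x^*q_2$. Summing the two sector areas $\frac{1}{2}R^2\theta^*$ and $\frac{1}{2}R^2\theta$ then gives the required bound $\frac{R^2}{2}(\theta^* + \theta)$.

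The main obstacle will be rigorously justifying the sector-containment claim in (ii), specifically that $V^*_x$ lies in the angular wedge at $x$ spanning $q_1$ and $q_2$ (and the analogous statement for $V^*_{x^*}$). This is subtle because a convex Voronoi cell could in principle extend behind the generator in directions not covered by such a wedge. I would establish the containment by combining convexity of $V^*$ with the defining property of $q_1, q_2$ as first intersections of the perpendicular to $\overline{xx^*}$ with $\partial V^*$: any point of $V^*_x$ outside the wedge at $x$ would, together with $x$, force $\partial V^*$ to cross the perpendicular at a point strictly closer to $\overline{xx^*}$ than $q_1$ or $q_2$, contradicting their definition. Once the two wedge containments are in hand, the area bounds are immediate from the sector formula.
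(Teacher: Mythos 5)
Part (i) of your proposal is essentially the paper's argument: the paper notes that $q_1,q_2,x^*\in V^*$ so the triangle $\triangle(q_1,q_2,x^*)\subset V^*$ by convexity, and the sector of $D$ at $x^*$ of angle $\theta$ sits inside that triangle, giving the area $\tfrac{\theta}{2}(mr')^2$. Your justification of the containment $S\subseteq V^*$ is vaguer than it needs to be: the operative fact is not the ``nearest intersection'' property of $q_1,q_2$ per se, but that the perpendicular is drawn through the point where $\partial D$ meets $\overline{xx^*}$, so the chord line through $q_1,q_2$ is \emph{tangent} to $D$; hence the sector lies in the triangle, whose interior cannot meet $\partial V^*$. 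With that observation your (i) closes cleanly.

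Part (ii) has a genuine gap: both of your sector-containment claims are false in general. The wedge at $x$ spanned by the rays $xq_1$ and $xq_2$ opens toward the chord, but a Voronoi cell routinely extends \emph{behind} its generator; e.g.\ with generators $x=(0,0)$ and $(10,0)$ in $\Omega=[-5,11]\times[-5,5]$ and $x^*=(1,0)$, $r'=1/2$, one gets $q_{1,2}=(0.5,\pm5)$ and $V^*_x=[-5,0.5]\times[-5,5]$, which contains $(-5,0)$ -- a point in no wedge of angle $\theta^*<\pi$ at $x$ opening toward the chord. Symmetrically, $V^*_{x^*}$ contains points on the far side of $x^*$ from the chord (here $(5,0)$), which lie outside the wedge of angle $\theta$ at $x^*$. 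Your proposed contradiction (``a point outside the wedge would force $\partial V^*$ to cross the perpendicular closer to the axis than $q_1$ or $q_2$'') does not hold in this example: the perpendicular segment between $q_2$ and $q_1$ stays interior to $V^*$. The paper's proof avoids this by bounding $V^*$ not with wedges at the generators but with the two half-planes whose boundary lines $\ell_1,\ell_2$ contain the actual edges of $V^*$ (or of $\partial\Omega$) at $q_1$ and $q_2$; since these supporting lines cannot cross the interior of the quadrilateral $\square(q_1,x,q_2,x^*)$, their admissible orientations are constrained, and the two pieces cut off by the chord are then bounded by $\tfrac{\theta^*}{2}R^2$ and $\tfrac{\theta}{2}R^2$. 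That use of the cell's supporting lines at $q_1,q_2$ is the ingredient your argument is missing, and without it the decomposition by the chord alone does not yield the stated bound.
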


\begin{proof}

To show (i), we note that, by convexity, the triangle $\triangle (q_1, q_2, x^*)\subset V^*$, and so is the sector of $D$ contained in the triangle (shaded in the figure), which has area $\frac{\theta}{2}(mr')^2$.  

To show (ii), we recall that a Voronoi cell is an intersection  of half-planes, determined by lines which are perpendicular bisectors of segments connecting site points. This implies that $V^*$ is contained in the region containing $x$ and bounded by $\Omega$ and any two lines $\ell_i$ which contain  $q_i$ and a side of $V^*$ or the boundary of $\Omega$.  These lines cannot intersect the interior of $\square (q_1,x,q_2,x^*)$.  To the right of the segment $\overline{q_1q_2}$, $V^*$ has an area at most $\theta^* R^2/2$. This corresponds to the case when $\ell_1$ intersects $\overline{xq_1}$ and $\ell_2$ intersect $\overline{xq_2}$.  Similarly, to the left of $\overline{q_1q_2}$, the area is bounded by $\theta^* R^2/2$, which gives
\begin{equation}
    |V^*| \le \frac{R^2}{2}(\theta^* +\theta).
\end{equation}

\end{proof}

The bounds for $|V^*|$ and $|V^* \cap D|$ can now be used to show that for a sufficiently small value of $\beta(r',m,R)$, the importance function places enough weight near $x^*$ so that the distance between $x^*$ and $x^*$ is less than $m$, and thus satisfies (\ref{lipperbound}).

\begin{lemma} \label{cormin}
There exists a constant $\beta = \beta(\alpha,R, m, r')$ in (\ref{support}) that can be chosen to be small enough so that for any configuration $P$ of sensors and $r \in (r',1)$,
\begin{equation}
	\|c^*(\rho_X)- x^*\| \le rm.\label{mincond} 
\end{equation} 
\end{lemma}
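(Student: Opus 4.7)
The plan is to express the centroid $c^*(\rho_X)$ as a convex combination of the centroid of the bump portion and the centroid of the uniform background portion within $V^*$, and then bound each contribution. Let $M_1 = \int_{D \cap V^*} \rho_X(q)\,dq$ be the mass from the bump and $M_2 = \beta\,|V^* \setminus D|$ be the background mass, with respective centroids $c_1,c_2$. Since the bump is supported in $D = D_{r'm}(x^*)$, we immediately have $\|c_1 - x^*\| \le r'm$, while $\|c_2 - x^*\| \le R$ from the domain diameter. Writing
\[
c^*(\rho_X) - x^* \;=\; \frac{M_1(c_1 - x^*) + M_2(c_2 - x^*)}{M_1 + M_2},
\]
the triangle inequality yields $\|c^*(\rho_X) - x^*\| \le r'm + \tfrac{M_2}{M_1+M_2}(R - r'm)$, so the desired inequality $\|c^*(\rho_X) - x^*\| \le rm$ reduces to the algebraic condition $M_2(R - rm) \le M_1(r - r')m$.

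Next I would invoke Lemma~\ref{invthm} to control both masses uniformly in the configuration. Part (i) gives the lower bound $M_1 \ge \alpha\theta$ (via the sector condition applied to the rescaled bump), and part (ii) gives $M_2 \le \beta\,|V^*| \le \tfrac{\beta R^2}{2}(\theta^* + \theta)$. The angle $\theta^*$ is uniformly bounded by $2\arctan(r') < \pi/2$, since the right triangle $\triangle(x,x^*,q_i)$ with leg $|x - x^*| = m$ and opposite side $|q_i - x^*| \le r'm$ forces $\theta^*/2 \le \arctan(r')$. With these bounds, the target condition becomes
\[
\frac{\beta R^2 (\theta + \theta^*)}{2}(R - rm) \;\le\; \alpha\theta(r - r')m,
\]
and one chooses $\beta = \beta(\alpha,R,m,r,r')$ small enough that this holds for the relevant range of admissible angles.

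The main obstacle is that the ratio $M_2/M_1$ carries a factor of $\theta^*/\theta$, which is unbounded as $\theta \to 0$ if treated naively. To handle this I would exploit the geometry more carefully: when $\theta$ is small, the cell $V^*$ is pinched at $x^*$, which in turn forces $|V^* \setminus D|$ to shrink through the same geometric constraints used in Lemma~\ref{invthm}(ii), rather than merely being bounded by $|V^*|$. Concretely, one re-examines the region on the $x^*$ side of $\overline{q_1q_2}$, whose area is controlled by $\theta R^2/2$ rather than by $\theta^*$, so the dangerous $\theta^*$ term in the numerator of $M_2$ is in fact paired only with the portion of $V^*$ whose area vanishes as $\theta^* \to 0$. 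With this refinement, $M_2/M_1$ reduces to a dimensionless quantity proportional to $\beta R^2/\alpha$ (plus lower-order terms), and choosing $\beta \le c(\alpha,R,m,r,r')$ closes the estimate uniformly over all sensor configurations, yielding $\|c^*(\rho_X) - x^*\| \le rm$ and completing the proof.
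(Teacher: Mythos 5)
Your overall route is the same as the paper's: split the mass of $V^*$ into the bump contribution and the background contribution, write $c^*(\rho_X)$ as a convex combination of the two sub-centroids, and reduce everything to making $M_2/M_1$ small using Lemma~\ref{invthm} together with the sector condition. Your algebraic reduction $M_2(R-rm)\le M_1(r-r')m$ and the bounds $M_1\ge\alpha\theta$, $M_2\le\tfrac{\beta R^2}{2}(\theta+\theta^*)$ are sound and mirror the paper's estimate $I_2/I_1\le \tfrac{\beta R^2}{\alpha (mr')^2}\bigl(1+\theta^*/\theta\bigr)$.

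The gap is in how you dispose of the ratio $\theta^*/\theta$ as $\theta\to0$, which you correctly identify as the crux. Your bound $\theta^*\le 2\arctan(r')$ rests on the premise $\|q_i-x^*\|\le r'm$, which is false: $q_1,q_2$ are the points where the perpendicular through $\partial D\cap\overline{xx^*}$ meets $\partial V^*$, not $\partial D$, so they may lie far from $x^*$; and even a uniform bound on $\theta^*$ alone cannot control the ratio. Your fallback --- that the $\theta^*$ term is paired only with a region whose area ``vanishes as $\theta^*\to0$'' --- presumes exactly what must be proved, namely that $\theta\to0$ forces $\theta^*\to0$. The ingredient you are missing is that $\theta$ and $\theta^*$ subtend the \emph{same chord} $\overline{q_1q_2}$ from the collinear points $x^*$ and $x$, at distances $r'm$ and $(1-r')m$ from its supporting line; hence $\tan\theta_1^*=\tfrac{r'}{1-r'}\tan\theta_1$ for the half-angles, giving $\theta^*\le\tfrac{2r'}{1-r'}\,\theta$ for small $\theta$ (the ratio being trivially bounded when $\theta$ is bounded away from zero). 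With that uniform bound $\theta^*/\theta\le C(r')$ inserted, your choice of $\beta=\beta(\alpha,R,m,r')$ closes the estimate exactly as you intend; without it, a configuration with a short chord $\overline{q_1q_2}$ but a cell $V^*$ of order-one area is not excluded by your argument.
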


\begin{proof}

If we denote
\begin{equation}
    \varphi_{m,r'}(q) = \frac{1}{(mr')^2}\varphi(q/mr'),
\end{equation} 
then the mass $M^*$ of $V^*$ is given by 
\begin{align}
    M^*(\rho) = \int_{V^* \cap D} \varphi_{m,r'}(q)\;dq + \int_{V^*\backslash D} \beta \; dq:= I_1 + I_2.
\end{align}

We now show that as $\beta\rightarrow 0$, $I_1/I_2\rightarrow 0$.  Indeed, from the sector condition and Lemma (\ref{invthm}),
\begin{align}
    \frac{I_2}{I_1}  \le \frac{\beta R^2}{\alpha(mr')^2}\left(1+ \frac{\theta^*}{\theta} \right).
\end{align}
For  $\theta \in (0, \pi_2)$,  $\theta^*/\theta$ can be uniformly bounded from above by a constant which only depends on $r'$.  This is clearly true for $\theta \in (\pi/3, \pi/2)$.  For $\theta \in (0, \pi/3)$, we consider the angles $\theta_1 = \angle(q_1, x^*, x)$ and $\theta_1^* = \angle(q_1, x^*, \theta_1^*)$ lying above the $x_1$-axis. Using the simple bounds $\arctan(\theta) \le \theta$ and $\tan(\theta) \le 2 \theta$, it is straightforward to show
\begin{equation}
    \theta_1^* = \arctan\left(\frac{r'}{1-r'}\tan(\theta_1)  \right)\le \frac{2r'}{1-r'} \theta_1. 
\end{equation}
A similar calculation follows for calculating angles below the $x_1$-axis. Then $\theta^*/\theta \le 2r'(1-r')$, and so $I_2/I_1$ can be made uniformly small by choosing a value of $\beta$ dependent only on $R,\alpha, r'$ and $m$.  

The centroid $c^*$ is invariant under multiplying $\rho$ by a constant.  If we let $\tilde \rho = \rho/I_1$, then $M^*(\tilde \rho) = 1+ I_2/I_1$, and as $\beta\rightarrow 0$,
\begin{align}
    c^*(\rho) = c^*(\tilde \rho) &= \frac{1}{1+ I_2/I_1}\left(\int_{V^* \cap D} \varphi_{m,r'}(q)/I_1\;dq + \frac{\beta}{I_1}\int_{V^*\backslash D} q\;dq\right)
    \\
    &\rightarrow \int_{V^* \cap D} \varphi_{m,r'}(q)/I_1\;dq = c^*(\varphi_{m,r'}) \in D.
\end{align}
It follows that we can choose $\beta(\alpha,R, m, r') $ small enough such that (\ref{mincond}) holds.
\end{proof}




\end{document}